\renewcommand\footnotetextcopyrightpermission[1]{}
  \providecommand\BibTeX{{%
    \normalfont B\kern-0.5em{\scshape i\kern-0.25em b}\kern-0.8em\TeX}}}
\begin{document}


\title{DPHGNN: A Dual Perspective Hypergraph Neural Networks}

\if 0
\author{Aseem Srivastava}
\affiliation{
  \institution{IIIT-Delhi}
  \city{New Delhi}
  \country{India}
}
\email{aseems@iiitd.ac.in}

\author{Ishan Pandey}
\affiliation{
  \institution{IIIT-Delhi}
  \city{New Delhi}
  \country{India}
}
\email{ishan20304@iiitd.ac.in}

\author{Md. Shad Akhtar}
\affiliation{
  \institution{IIIT-Delhi}
  \city{New Delhi}
  \country{India}
}
\email{shad.akhtar@iiitd.ac.in}

\author{Tanmoy Chakraborty}
\affiliation{
  \institution{IIT Delhi}
  \city{New Delhi}
  \country{India}
}
\email{tanchak@iitd.ac.in}

\fi

\author{Siddhant Saxena\textsuperscript{$1$}, Shounak Ghatak\textsuperscript{$2$}, Raghu Kolla\textsuperscript{$3$}, Debashis Mukherjee\textsuperscript{$3$}, Tanmoy Chakraborty\textsuperscript{$1$}}
\affiliation{
\country{
    {
        \textsuperscript{$1$}IIT Delhi, India; 
        \textsuperscript{$2$}IIIT Delhi, India; \textsuperscript{$3$}Meesho, India
    }
}
}
\email{siddhantsaxenaphy@gmail.com, shounak19109@iiitd.ac.in,}
\email{{raghu.kolla,debashis.mukherjee}@meesho.com, tanchak@iitd.ac.in}

\begin{abstract}
Message passing on hypergraphs has been a standard framework for learning higher-order correlations between hypernodes. Recently-proposed hypergraph neural networks (HGNNs) can be categorized into spatial and spectral methods based on their design choices. 
In this work, we analyze the impact of change in hypergraph topology on the suboptimal performance of HGNNs and propose DPHGNN, a novel dual-perspective HGNN that introduces equivariant operator learning to capture lower-order semantics by inducing topology-aware spatial and spectral inductive biases. DPHGNN employs a unified framework to dynamically fuse lower-order explicit feature representations from the underlying graph into the super-imposed hypergraph structure. We benchmark DPHGNN over eight benchmark hypergraph datasets for the semi-supervised hypernode classification task and obtain superior performance compared to seven state-of-the-art baselines. We also provide a theoretical framework and a synthetic hypergraph isomorphism test to express the power of spatial HGNNs and quantify the expressivity of DPHGNN beyond the Generalized Weisfeiler Leman (1-GWL) test. Finally, DPHGNN was deployed by our partner e-commerce company for the Return-to-Origin (RTO) prediction task, which shows $~7\%$ higher macro F1-Score than the best baseline. 
\end{abstract}

\maketitle

\section{Introduction}

In recent years, the use of spatial and spectral message-passing neural networks (MPNN) \cite{c:01,verma2019heterogeneous} on graph topology has grown exponentially to solve various downstream tasks such as node label classification, link prediction, and graph classification \cite{c:02}. Graph Neural Networks (GNNs) achieve exceptional performance for representation learning on graph-structured data. GNNs improve contextual feature representation of nodes in the graph via layer-wise spatial feature aggregation and message propagation. A wide range of applications involving graph-based semi-supervised node classification, such as molecular property prediction \cite{c:03}, topic modeling of research papers from a scientific citation network \cite{c:04},  user-item recommendations in e-commerce networks \cite{c:05}, have been explored. However, the graph structure limits modeling the interactions to pair-wise node connections and, therefore, is restricted to capturing only the lower-order correlation and relationships between entities.

Hypergraphs provide a flexible mechanism to model higher-order data correlation and complex relationships by allowing hyperedges of two or more hypernodes. Many real-world applications of semi-supervised node classification involve higher-order relation modeling \cite{c:23}, such as academic citation networks consisting of hypernodes as scientific authors and hyperedges as coauthorship relations among authors. \citet{c:25} used hypergraph structure to describe quantum optical experiments. \citet{c:24} improved collaborative filtering, and \citet{c:27} used session-based recommendations using higher-order relations. \citet{c:29} considered semi-dynamic hypergraphs for 3d pose estimation.
Recent studies proposed models for Hypergraph Neural Networks (HGNNs) \cite{c:06}, Hypergraph Convolution and Hypergraph Attention \cite{c:22}. However, despite the flexibility and advances of message-passing neural networks on hypergraphs, there exist some topological and design-related challenges, as mentioned below:

\begin{itemize}[leftmargin=*]
    \item Performance gap between spatial and spectral HGNNs with modeling data into hypergraph topology. 
    \item Effects of over-smoothing and feature collapse in hypergraph topology with resource-constrained setting (e.g., class imbalance, sparse incidence structure,
    a limited set of features, etc.) .
    \item Exploiting the underlying graph structure to explicitly incorporate information from the lower-order structure to the higher-order message-passing framework.
\end{itemize}

\noindent To address these challenges, we propose DPHGNN ({\bf D}ual {\bf P}erspective {\bf H}yper{\bf g}raph {\bf N}eural {\bf N}etwork),  a mixed spectral and spatial learning framework that improves feature representation learning on resource-constrained hypergraph settings. DPHGNN adopts -- (i) a dual-layered feature update mechanism, (ii) a static update layer to provide spectral inductive biases and lower-order relational features to update the static feature matrix of hypernodes, and (iii) a dynamic update layer to fuse the explicitly aggregated features from the underlying graph topology in the hypergraph spatial message propagation. Moreover, with extensive empirical and theoretical analyses, we show that DPHGNN can tackle the above-mentioned challenges and produce improved feature representations for the downstream hypernode classification task.

\begin{table*}[]
\centering
\scalebox{1}{
\begin{tabular}{l|p{7cm}||l|p{6cm}}
\textbf{Notation} & \textbf{Description}  & \textbf{Notation} & \textbf{Description}\\\hline
$G = (V, E)$ & An undirected graph with a set of nodes $V$ and a set of edges $E$ & $HG = (V, \xi)$ & An undirected hypergraph with a set of hypernodes $V$ and  a set of hyperedges $\xi$\\
$A$ & Adjacency Matrix & $H$  & Hypergraph incidence matrix \\ 
$D_{v}$                                  & Node degree matrix                                                                                                                            & $D_{v}$                                                                                      & Hypernode degree matrix                                                                                                                                               \\
$L_{sym}$                                & Symmetric Laplacian Matrix                                                                                                                    & $D_{e}$                                                                                      & Hyperedge degree matrix                                                                                                                                               \\
$G_{c}$, $A_{c}$                             & Clique decomposition graph and adjacency                                                                                                      & $\Delta_{HGNN}$                                                                                    & HGNN Laplacian matrix                                                                                                                                                 \\
$G_{*}$, $A_{*}$                              & Star decomposition graph and adjacency                                                                                                        & $\Delta_{sym}$                                                                                     & Symmetric Laplacian matrix                                                                                                                                            \\
$G_{hyp}$, $A_{hyp}$                          & HyperGCN decomposition graph  and adjacency                                                                                                   & $\Delta_{rw}$                                                                                      & Random-walk Laplacian matrix                                                                                                                                     \\ \hline
\end{tabular}}
\caption{A summary of notations used throughout the paper.}
\label{table1}
\vspace{-5mm}
\end{table*}
    
We summarize our major contributions below\footnote{The source code of DPHGNN is available at \url{https://github.com/mr-siddy/DPHGNN}.}:


\begin{itemize}[leftmargin=*]
    \item DPHGNN introduces a novel message propagation framework that explicitly diffuses lower-order graph features to super-imposed hypergraph structure.
    \item We introduce equivariant operator learning (EOL) over TAA and SIB inductive biases. EOL creates an information-rich feature mixture, rather than cold-start with HG initial features. This also constitutes maximally expressive, k-order equivariant layers in DPHGNN.
    \item We perform extensive empirical analysis on DPHGNN's generalized performance over eight benchmark datasets, and a new isomorphic HG classification task. We formalize strong mathematical characterization on 3-GWL expressivity of DPHGNN, automorphism groups, and EOL.
    \item We introduce CO-RTO, a new real-world application to tackle the challenge of the RTO problem in e-commerce. DPHGNN being robust to variations in topological constraints, beats HGNN baselines with a large margin.
\end{itemize}

\section{Preliminaries, Background and Related Work}
Let $HG = (V,\xi)$ denote an undirected hypergraph without self-loops, where $V$ is a set of hypernodes and a hyperedge $e \in \xi$ is composed of a set of nodes; therefore, $\{e \subset V\} \wedge \{ e \neq \emptyset\}$.  The incident edges of hypernode $i$ are denoted by \(E_{i}= \{e \in \xi \mid i \in e\}\). 
Table \ref{table1} summarizes the notations used throughout the paper.

\textbf{Graph Neural Networks.} Graph Neural Networks (GNNs) have been successful by following the message-passing neural network (MPNN) paradigm to update representations of nodes in the graph topology by aggregating the feature information from neighboring nodes. A simple message propagation in GNNs can be formulated as follows: spectral in
\begin{equation*}
\hat{x}_v^{(l)}=\operatorname{Update}^{(l)}\left(x_v^{(l)}, \operatorname{Aggr}^{(l)}\left(\left\{x_u^{(l-1)} \forall u \in N(v)\right\}\right)\right)
\end{equation*}
where  \(\operatorname{Aggr}^{(l)}\) aggregates node embeddings of the neighborhood $N(\cdot)$ of node $v$ at layer \(l-1\), and  \(\operatorname{Update}^{(l)}\) assigns aggregated embedding to node $v$ at layer $l$. 
$N(v)$ is the neighbors of node $v$, and $x_u^{(l-1)}$ is the representation of node $u$ at $(l-1)$ layer.
The foundation message-passing GNN models include GCN \cite{c:08}, GraphSAGE \cite{c:09}, GIN \cite{c:07}, JK-GCN \cite{c:31} and GAT \cite{c:10}.

\textbf{Spatial Hypergraph Networks.} To generalize MPNN over hypergraph, several attempts have been made in which spatial HGNNs tend to aggregate the hypernode embeddings from a hyperedge, aggregate the embedding from neighboring hyperedges and update the hypernode feature information \cite{c:11, c:12}. A simple spatial message propagation in HGNNs can be formulated as follows:


\begin{align} \label{eq:1}
\hat{x}_v^{(l)} &= \operatorname{Update}^{(l)}\left( x_v^{(l)}, \right. \nonumber \\
&\quad \left. \operatorname{Aggr}^{(l)}\left( \left\{ \left\{ \operatorname{Aggr}^{(l-1)}\left( x_u^{(l-1)} \right) \forall u \in e \right\} \forall e \in E_{i} \right\} \right) \right)
\end{align}

\noindent The benchmark models following spatial HGNN mechanism include HGNNP \cite{c:11}, HyperSAGE\cite{c:27}, HNHN \cite{c:12}, HAIN \cite{c:35} and the family of UniGNNs \cite{c:13}.

\textbf{Spectral Hypergraph Networks.} An HGNN layer uses spectral convolution to learn node/edge representation via approximating each cluster to a clique and requires quadratic computation. The HyperGCN layer \cite{c:14}  considers a linear number of edges resulting in a reduction in training times and uses a Laplacian operator for hypergraphs to approximate the hyperedges followed by a regular graph convolution operation on the resulting graph. MPNN-R \cite{c:15} considers hyperedges as new vertices \(V=\{V \cup \ \xi\}\) and presents the hypergraph with a $|V| \times |\xi|$ matrix. However, it fails to describe the intersections between hyperedges, resulting in a loss of complex semantic relations. These approaches can be implemented as smoothing features with trainable hypergraph Laplacian operators. Some frequently used Laplacian operators include
HGNN Laplacian matrix \(\Delta_{HGNN}=D_v^{-1 / 2} H D_e^{-1} H^T D_v^{-1 / 2}\), Symmetric Laplacian matrix \(\Delta_{sym}=I - D_v^{-1 / 2} H D_e^{-1} H^T D_v^{-1 / 2}\), Random walk Laplacian matrix \(\Delta_{rw}= I - D_v^{-1} H D_e^{-1} H^T\) \cite{c:26} (see Table \ref{table1} for notations).


\begin{algorithm}[!t]\small
\caption{DPHGNN: Dual Perspective Hypergraph Neural Network} \label{alg:algorithm1}
\begin{flushleft}
\textbf{Input} Hypergraph incidence matrix: $H \in R^{m \times n}$\ \\
               Hypernode feature matrix: $X_{H G}$ \\ 
               Target labels: $\{y\}_{\text{train}}$ \\
               Epochs: $e$ \\
               Num Layers: $k$ \\
\textbf{Output} Learned representations $\widehat{X}_{\text{HG}}$ for downstream tasks.
\end{flushleft}
\begin{algorithmic}
\For{\text{$\mathrm{i}=1$ to $e+1$ epochs}} 
    \State {\text{\textbf{Project:} $f(x): X_{H G} \rightarrow X_{H G}^{\prime}$ hidden dimension}}
    \State {\text{\textbf{Compute:} $\Delta_{\text{spectral}}=\left[\left(\Delta_{r w} \oplus \Delta_{\text {sym }}\right)|;| \Delta_{H G N N}\right]$}} 
    \State {\text{\textbf{Update hypernodes:} $X_{HG_{\text{spectral}}}=\sigma\left[\left(I+\lambda \Delta_{\text {spectral }}\right) X_{H G}^{\prime} \theta\right]$}} 
    \State {\text{\textbf{Compute TAA:} $\widehat{x}_\kappa=\Sigma_{\kappa \in N_\beta} \widehat{\alpha}_{\beta Y} \kappa, \widehat{x}_z=\Sigma_{Z \in N_x} \widehat{\alpha}_{X Y} Z$}} 
    \State {\textbf{Update hypernodes:}} 
    \State {$X_{\text{eqv}} = MLP^{1}(\hat{x}_{\kappa}|;|\hat{x}_{Z})\odot \sigma(\text{ReLU}(MLP^2(X_{HG_{\text{spectral}}})))$}
    \State {\textbf{Update static features}}
    \State {$X_{\text{static}} = X_{\text{eqv}} |;| MLP^{3}(X_{\text{HG}})$}
    \For{\text{$\mathrm{l}$ in $k-1$ layers}}
                \State Update hypernodes as:
                \State Feature Fusion:
                \State{$X_{\text{Fused}}=\left(H^T D_v^{-1 / 2} X_{\text {static}}+D_v^{-1} \phi_{\text {mask }}\left\{A X_G*\right\}\right)$}
                \State {$X_{\text {DPHGNN}}=\sigma\left[X_{\text {static}}+H D_e^{-1} X_{\text{Fused}}\Theta\right]$}
    \EndFor
    \State Update hypernodes as:
    \State {
                \resizebox{0.9\columnwidth}{!}{
                $\widehat{X}_{H G}=\sigma\left(D_v^{-1 / 2} X_{\text{DPHGNN}} \bar{D}_e^{-1 / 2} \cdot D_e^{-1} H^T D_v^{-1 / 2} X_{\text{DPHGNN}} \Theta\right)$}
                }
    \State Compute cross-entropy loss between $\{y\}_{\text {train }}$ and predictions $\left\{\widehat{X}_{H G}\right\}_{\text {train }}$
    \State Backpropagate loss and finetune parameter set using Adam optimizer
\EndFor
\end{algorithmic}
\end{algorithm}

\section{Dual Perspective Hypergraph Neural Network (DPHGNN)}
Our proposed DPHGNN architecture is presented in Figure \ref{fig:ourmethod} and summarized in Algorithm \ref{alg:algorithm1}. DPHGNN first induces various structural and spectral inductive biases to learn the underlying lower-order relations and spectral features. Following this, a dynamic feature fusion mechanism fuses explicit aggregated features from specific graph topology with the hypergraph message passing. We finetune different sub-layers in the static and dynamic induction blocks with cross-entropy loss by making the convolution layer an end-to-end differentiable pipeline. Specific building blocks are explained in the remaining section.

\subsection{Topology-Aware Attention (TAA)}
A hypergraph can be decomposed into various substructures representing underlying pairwise connections between hypernodes. We carefully decompose it into three particular graph topologies -- (i) {\bf Clique expansion} $G_{c}$ is used to learn the interconnection of node-pair entities.  (ii) {\bf Star expansion} $G_{*}$ is used to generate explicit supernodes utilized in the dynamic feature fusion module by masking the synthetic supernodes and nodes already present in the graph. This can be achieved by introducing a mask defined as,
\begin{align*}
\centering
\phi_{\text {mask }}=\left\{\begin{array}{l}1 \text { if } x_i \in V^* \cup S^* \\ 0 \text { if } x_i \in(V^* \cup S^*) \backslash V^*\end{array}\right.
\end{align*}
(iii) {\bf HyperGCN expansion}  $G_{hyp}$ is used to learn lower-order graph functions, approximate the hypergraph learning functions and update the feature representation using a single-layer MPNN  update. The choice of respective graph MPNN is made to maximize the induction of feature updates in respective graph topologies with the following rules:
\begin{align*}
\centering
X_c &= \sigma\left[\left(I+D_v^{-1} A_c\right) X \theta\right]; 
X_* = \sigma\left[\left(I+D_v^{-1} A_*\right) X \theta\right] \\
X_{hyp} &= \sigma\left(\widehat{D}_v^{-1 / 2} \widehat{A}_{h y p} \widehat{D}_v^{-1 / 2} X \theta\right)
\end{align*}
where $X_{c}$, $X_{*}$, and $X_{hyp}$ represent single-layer MPNN update of nodes on ${G_c}$, ${G_*}$, and $G_{hyp}$, respectively. We also compute smoothened features through graph Laplacian smoothing as \(L_{*}=D_{v}-A_{*}\), \(L_{c}=D_{v}-A_{c}\), \(L_{hyp}=D_{v}-A_{hyp}\) on \(G_{*}\), \(G_{c}\), and \(G_{hyp}\), respectively. This allows aggregating feature representations induced from different possible semantic relations among lower-order entities. Moreover, we adopt a cross-attention mechanism for graphs \cite{c:36} to generate topology-aware feature representations described below.

\begin{figure*}[!t]
\centering
\includegraphics[width=\textwidth]{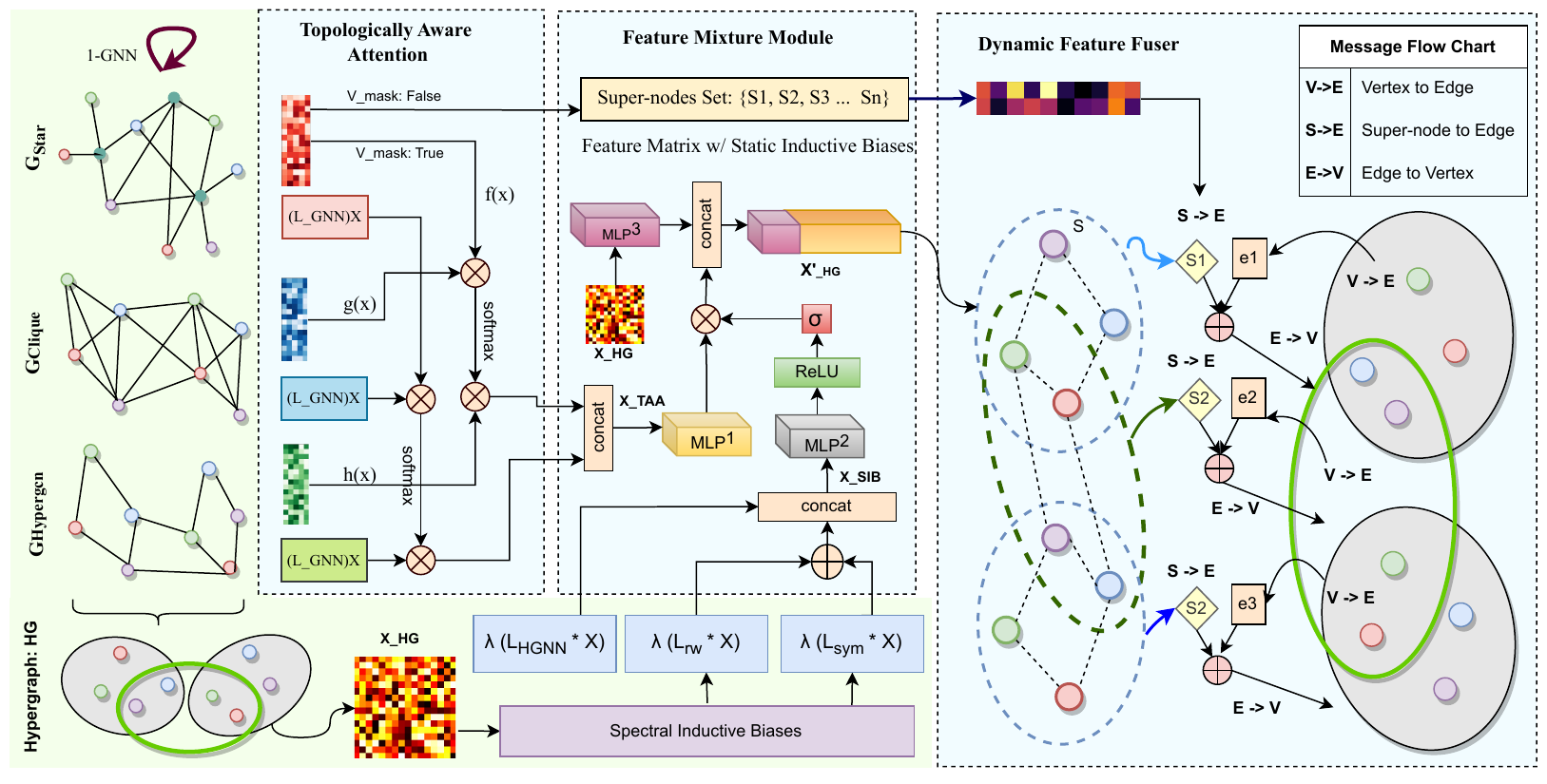} 
\caption{A schematic diagram of our proposed architecture, DPHGNN. Left: Hypergraph decomposition and topology-aware Attention (TAA) mechanism. Middle: Feature Mixture that generates static features by incorporating spectral inductive biases from hypergraph Laplacian smoothing and TAA.  Right: Dynamic feature fusion (DFF) that fuses explicitly learned graph embedding in supernodes with the hypergraph message-passing module.}
\label{fig:ourmethod}
\end{figure*}

\subsubsection{TAA on Spatial Features.} Let  \(\beta = \{\beta_{i}=x_{i} | \forall x_{i} \in \phi_{mask}(X_{*})\}\),  \(\gamma =\{\gamma_{j}=x_{j} | \forall x_{j} \in X_{c}\}\), and  \(\kappa= \{\kappa_{k}=x_{k} | \forall x_{k} \in X_{hyp}\}\)  be the sets of feature representation of decomposed graph topology $G_*$, $G_c$, and $G_{hyp}$. The topology-aware attention weights \(\widehat{\alpha}_{\beta\gamma}\) would be  along with trainable attention parameters \(\delta \in \mathbb{R}^{2d}\) with ``$;$'' denoting concatenation operation. The updated feature representation for \(G_{hyp}\) is then obtained by,
\begin{align*}
\centering
 \alpha_{\beta \gamma}&=\sigma\left(\delta^T[W\beta|;| W\gamma]\right);\ \widehat{\alpha}_{\beta \gamma}=\frac{\exp \left(\alpha_{\beta \gamma}\right)}{\sum_{p \in N_\beta} \exp \left(\alpha_{\beta \gamma}\right)};\\
\widehat{x}_\kappa &=\Sigma_{p \in N_\beta} \widehat{\alpha}_{\beta \gamma} W\kappa 
\end{align*}


\subsubsection{TAA on Spectral Features.} Let \(X = \{l_{i}=x_{i} | \forall x_{i} \in \phi_{mask}[L_{*}\otimes X_{*}]\}\),  \(Y = \{m_{j}=x_{j} | \forall x_{j} \in [L_{c}\otimes X_{c}]\}\), and \(Z = \{n_{k}=x_{k} | \forall x_{k} \in [L_{hyp}\otimes X_{hyp}]\}\) be the sets of Laplacian smoothen features of decomposed topologies $G_*$, $G_c$, and $G_{hyp}$, respectively. Similarly, the attention-weighted features for \(L_{hyp} \otimes X_{hyp}\) feature matrix would be obtained by,
\begin{align*}
\centering
 \alpha_{XY}&=\sigma\left(\delta^T[WX|;| WY]\right);\ 
 \widehat{\alpha}_{XY}=\frac{\exp \left(\alpha_{XY}\right)}{\sum_{p \in N_X} \exp \left(\alpha_{XY}\right)}; \\
 \widehat{x}_Z&=\Sigma_{p \in N_X} \widehat{\alpha}_{XY} W  Z 
\end{align*}
Here, $W$ represents a linear transformation matrix. After the topological attention-based re-weighting of the graph's structural and spectral features, we elapse them to the Feature Mixer Module.

\subsection{Feature Mixture Module} 
The sub-optimal performance of existing HGNNs, with the change in hypergraph topology, naturally motivates us to construct a generalized learning mechanism. To this end, we introduce a feature mixture module that first aggregates the spectral inductive biases (SIB) and then produces a mixture of features satisfying the following desiderata:
TAA concatenation operation (i) provides spatial and spectral features of lower-order graph topology for more rich semantic feature representations; (ii) prevents over-smoothing of relevant information at the time of message passing over sparse hypergraph topology. SIB concatenation operation provides symmetrical random walk-based HGNN Laplacian smoothen features, inherently giving unique identifiers to hypernodes. This helps us break automorphism groups.

{\bf Aggregating Spectral Inductive Biases}
DPHGNN leverages smoothened features from symmetrical random walk-based HGNN Laplacian, denoted by \(\Delta_{\text{spectral}} = \left[\left(\Delta_{\text{rw}} \oplus \Delta_{\text{sym}}\right) |;| \Delta_{\text{HGNN}}\right]\). The SIB block is inherently inspired by different approaches to formulating the hypergraph Laplacian, where the smoothened features \(X_{\text{HG}_{\text{spectral}}} = \sigma\left[X + \lambda \Delta_{\text{spectral}}X\right]\), and the concatenation operation \(|;|\) is crucial to prevent the nullification of spectral features. This is because the element-wise sum of Laplacian matrices \(\Delta_{\text{HGNN}} \oplus \Delta_{\text{sym}} \oplus \Delta_{\text{rw}}\) trivially results in \(3I -  D_v^{-1} H D_e^{-1} H^T\), i.e., it contains only the random-walk Laplacian information. These features provide unique identifiers to hypernodes, implicitly helping to break automorphism groups in hypergraph topology.

{\bf Equivariant Feature Mixing.} 
We employ an equivariant operator learning mechanism to perform static feature updates on the hypernode features. The aggregated hypernode representations from TAA and SIB modules are propagated through downscaling MLPs, \(MLP^{1}: \mathbb{R}^{n \times 2d} \rightarrow \mathbb{R}^{n \times d/2}\) and \(MLP^{2}: \mathbb{R}^{n \times 2d} \rightarrow \mathbb{R}^{n \times d/2}\) in parallel. The original features from hypernodes are propagated through \(MLP^{3}: \mathbb{R}^{n \times d} \rightarrow \mathbb{R}^{n \times d/2}\). We perform a permutation equivariant operation on aggregated features:
{\setlength{\abovedisplayskip}{2pt} 
\setlength{\belowdisplayskip}{2pt} 
\begin{equation} \label{eq:2}
    X_{\text{eqv}} = MLP^{1}(\hat{x}_{\kappa}|;|\hat{x}_{Z})\odot \sigma(\text{ReLU}(MLP^2(X_{HG_{\text{spectral}}})))
\end{equation}
}
where \(\odot\) represents Hadamard product; the static feature update finally upscales the dense information-rich features through a concatenation \(|;|\) operation as \(X_{\text{static}} = X_{\text{eqv}} |;| MLP^{3}(X_{\text{HG}})\).

\subsection{Dynamic Feature Fusion (DFF)}
Here, we describe the dynamic feature fusion mechanism to explicitly capture the lower-order semantics from pair-wise node interactions and diffuse it into the hypergraph message-passing step to learn generalizable feature representations. The star expansion produces graph topology $G_{*}$ by connecting existing nodes in the edges to a synthetic node for the hyperedge set \cite{c:34}. This allows us to extract aggregated information of a neighborhood in the graph in these synthetic nodes (called {\em supernodes}, hereafter). We utilize this feature representation from supernodes and fuse it into features aggregated from hypernodes towards hyperedges. We then update the representations in the neighborhood from the fused hyperedge towards subsequent hypernodes. Following this, we reiterate the feature updation without dynamic features on the hypergraph topology for the final prediction layer and perform class prediction. The dynamic message-passing mechanism described here can be formulated using the following update rules: \\  
\begin{align*}
m_p^{(l)}&=\phi_{\text{mask}}\left[\operatorname{Aggr}^{(l)}\left(\left\{m_u^{(l-1)} \forall u \in N(p)\right\}\right)\right] \\
m_e^{(l-1)}&=\operatorname{Aggr}^{(l-1)}\left(\left\{m_u^{(l-1)} \forall u \in e\right\} \cup\left\{m_p^{(l)} \forall p \in E\right\}\right) \\
m_e^{(l)}&=\operatorname{Aggr}^{(l)}\left(\left\{m_e^{(l-1)} \forall e \in E_i\right\}\right) \\
x_v^{(l)}&=\operatorname{Update}^{(l)}\left(\left\{x_v^{(l-1)}, m_e^{(l)}\right\}\right) \\
\end{align*}

The embedding update mechanism for hypernodes proposed in DPHGNN can be summarised below: 
\begin{equation}\small \label{eq:4}
\widehat{X}_{\text{DPHGNN}} = \sigma\left[X_{\text{static}} + H D_e^{-1}\left(H^T D_v^{-\frac{1}{2}} X_{\text{static}} + \\ D_e^{-1} \phi_{\text{mask}}\left\{A X_{G_*}\right\}\right)\Theta\right]
\end{equation}

We initialize the hypernode features with $X_{static}$. DPHGNN performs summation over supernodes and hypernodes at the time of aggregation. The dimensions of the DPHGNN matrix multiplications are as follows:
$H \in \mathbb{R}^{n \times m}$,  $H^{T} \in \mathbb{R}^{m \times n}$, $A \in \mathbb{R}^{n \times n}$. $X_{\text{static}} \in \mathbb{R}^{n \times d}$,  and $X_{G_{*}} \in \mathbb{R}^{n+m \times d}$ are input matrices. $D_{v} \in \mathbb{R}^{n \times n}$, and $D_{e} \in \mathbb{R}^{m \times m}$ are the diagonal matrices. The dimensions involved in the operations are $D_e^{-1} \phi_{\text {mask }}\{AX_{G_{*}}\}$, $\mathbb{R}^{m \times m} \times \mathbb{R}^{m \times d}\rightarrow\mathbb{R}^{m \times d}$, $H^T D_v^{-1 / 2} X_{\text {static }}$, $\mathbb{R}^{m \times m} \times (\mathbb{R}^{n \times n} \times \mathbb{R}^{n \times d})\rightarrow\mathbb{R}^{m \times d}$. At the DFF, $X \in \mathbb{R}^{m \times d}$, and finally, $\mathbb{R}^{n \times m} \times (\mathbb{R}^{m\times m} \times \mathbb{R}^{m \times d})\rightarrow\mathbb{R}^{n \times d}$.

\subsection{Time Complexity Analysis} Give an attributed hypergraph $\text{HG}(V, \xi, d_{v}, d_{e}, X)$, where $d_{v}$, $d_{e}$, and $X$ are average degree of $n$ hypernodes, average degree of $m$ hyperedges, and multiset of features $\{x_{v}\}_{v \in V}$, where $x_{v} \in \mathbb{R}^{d}$ is of dimension $d$. The computation time of decomposing hypergraph into multiple views of graph (clique, hypergcn, star) structure and applying graph convolution is $\mathcal{O}(n_{c}d_{v}d_{e}d+n_{hyp}d_{v}d+(n+s)_{*}d_{v}d_{e}d)$ $\leq$ $\mathcal{O}((n+s)_{*}d_{v}d_{e}d+n_{hyp}d_{v}d)$. For inductive priors, the compute time is upper bounded by $\mathcal{O}(n_{hyp}d_{v}d_{e}d+(n+m)_{hyp}d^{2})$. The time complexity for DPHGNN message update mechanism is $\mathcal{O} ((n+s)_{*}d_{v}d+n_{hyp}d+m_{hyp}d+n_{hyp}d^{2}+m_{hyp}d^{2})$ $\leq$ $\mathcal{O}((n+s)_{*}d_{v}d+n_{hyp}d^{2}+m_{hyp}d^{2})$, subscripts $*$, $c$, and $hyp$ denote the respective vertex set $G_{*}, G_{c}, \text{and}\ G_{hyp}$. Table \ref{tab:time_complexity} presents a comparative analysis between the proposed DPHGNN and baseline HGNNs. We also provide corresponding run-time analysis in Table \ref{tab:runtime} in the appendix.

\begin{table}[!t]
\centering
\resizebox{0.8\columnwidth}{!}{
\begin{tabular}{c|c}
\hline
{Model} & {Time Complexity} \\ \hline
HGNN & $\mathcal{O}(nd^{3})$ \\ 
HGNN+ & $\mathcal{O}(nd^{2}+md_{e}^{2})$ \\ 
HyperGCN & $\mathcal{O}(mn^{2}d^{2})$ \\ 
HNHN & $\mathcal{O}(nd^{2} + md^{2})$ \\ 
UniGCN & 	$\mathcal{O}(md+nd^{2})$ \\ \hdashline
DPHGNN & $\mathcal{O}((n+s_{*})d_{v}d+n_{hyp}d^{2}+m_{hyp}d^{2})$ \\ \hline
\end{tabular}}
\caption{Time Complexity of different models.}
\label{tab:time_complexity}
\end{table}

\section{Theoretical Characterization}
\subsection{Equivariant Operator Learning}
The DPHGNN architecture generalizes the performance across the change in hypergraph topology; several operators are used to bridge the tradeoffs of spatial and spectral HGNNs. In proposition \ref{prop:4.1} we prove that the feature mixer module encodes features from TAA and SIB blocks in a permutation equivariant manner.
We have provided an in-depth analysis of the correlation between specified model blocks and hypergraph dataset in the Ablation Study section in Experiments.

\begin{proposition}
    The encoding function $f: (X_{\text{HG}}, X_{\text{TAA}}, X_{\text{SIB}})\rightarrow X_{\text{static}}$ learned by equation \ref{eq:2} is permutation equivariant, i.e, if $\pi$ is a bijective function; $f(\pi \cdot X)= \pi \cdot f(X)$.
    \label{prop:4.1}
\end{proposition}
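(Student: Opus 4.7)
The plan is to decompose the map $f$ in equation (2) into its constituent operators and verify that each preserves permutation equivariance, so that the composition inherits the property. Throughout, a permutation $\pi$ of the $n$ hypernodes acts on a feature matrix $X \in \mathbb{R}^{n \times d}$ by row permutation, i.e.\ $(\pi \cdot X)_i = X_{\pi^{-1}(i)}$, and on pairwise operators such as the hypergraph Laplacian or attention-neighborhood tensor by simultaneous row and column permutation. The goal is to show $f(\pi \cdot X_{\text{HG}}, \pi \cdot X_{\text{TAA}}, \pi \cdot X_{\text{SIB}}) = \pi \cdot f(X_{\text{HG}}, X_{\text{TAA}}, X_{\text{SIB}})$.

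First, I would record the elementary facts I will reuse: (i) any row-wise MLP, i.e.\ a map of the form $X \mapsto \phi(XW)$ for weight $W$ and pointwise nonlinearity $\phi$, satisfies $\phi((\pi X)W) = \pi \phi(XW)$; (ii) pointwise activations $\sigma$ and $\mathrm{ReLU}$ commute with $\pi$; (iii) feature-wise concatenation $[A \mid; \mid B]$ is equivariant when $A$ and $B$ are both equivariant with respect to the same $\pi$, because $[\pi A \mid;\mid \pi B] = \pi [A \mid;\mid B]$; and (iv) the Hadamard product $\odot$ is equivariant since $(\pi A)\odot(\pi B) = \pi (A\odot B)$. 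Applying these to equation (2) directly gives $X_{\text{eqv}}(\pi \cdot \text{inputs}) = \pi \cdot X_{\text{eqv}}$, provided I can establish the equivariance of the three input streams $\hat{x}_\kappa$, $\hat{x}_Z$, and $X_{\text{HG}_{\text{spectral}}}$.

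For the SIB stream, I would use the fact that a relabeling of hypernodes sends the incidence matrix $H$ to $P_\pi H$ (with $P_\pi$ the permutation matrix) and leaves $D_e$ intact, so each Laplacian transforms as $\Delta \mapsto P_\pi \Delta P_\pi^{\top}$; then $\Delta_{\text{spectral}}(\pi X) = P_\pi \Delta_{\text{spectral}} X$, and the update $X_{\text{HG}_{\text{spectral}}} = \sigma[(I+\lambda \Delta_{\text{spectral}})X\theta]$ is equivariant. For the TAA stream, the attention coefficients $\alpha_{\beta\gamma}$ and $\alpha_{XY}$ depend only on pairs of feature rows through a shared linear map and concatenation, so relabeling both arguments yields $\hat{\alpha}_{\pi(i)\pi(j)} = \hat{\alpha}_{ij}$; since the softmax is taken over the permuted neighborhood $N_{\pi(i)} = \pi(N_i)$, the weighted sums $\hat{x}_\kappa$ and $\hat{x}_Z$ permute with the row index, giving equivariance of the TAA features. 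The underlying clique, star, and HyperGCN decompositions inherit the node relabeling, so the spatial updates $X_c, X_*, X_{\text{hyp}}$ are equivariant as well, and the masking function $\phi_{\text{mask}}$ is defined pointwise on node type, hence invariant under $\pi$ acting inside each type.

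Combining these, $X_{\text{eqv}}$ is equivariant, and the final concatenation $X_{\text{static}} = X_{\text{eqv}} \mid;\mid MLP^{3}(X_{\text{HG}})$ preserves equivariance by facts (i) and (iii). The main technical obstacle will be the attention block: one must be careful that the softmax normalization, which ranges over a neighborhood set rather than a fixed index set, is well-defined under relabeling and does not silently break equivariance. The way around this is to express attention as a masked matrix with zeros outside $N_i$ and note that this mask, being derived from the same decomposed graph topologies, transforms by $P_\pi (\cdot) P_\pi^{\top}$; once this is made precise, the remainder of the argument is a routine verification of the elementary facts above.
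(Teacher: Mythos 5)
Your proposal is correct, and it is considerably more thorough than the argument the paper actually gives. The paper's proof isolates a single step: it sets $A = MLP^{1}(\hat{x}_{\kappa}\,|;|\,\hat{x}_{Z})$ and $B = MLP^{2}(X_{HG_{\text{spectral}}})$, \emph{assumes} their entries already transform as $a_{ij}\mapsto a_{\pi(i)\pi(j)}$ and $b_{ij}\mapsto b_{\pi(i)\pi(j)}$ under the relabeling, and then observes that the Hadamard product commutes with this index permutation, i.e.\ $c_{\pi(i)\pi(j)} = a_{\pi(i)\pi(j)} \odot b_{\pi(i)\pi(j)}$. That is exactly your fact (iv) and nothing more. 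Everything else in your plan --- the equivariance of row-wise MLPs, of the pointwise nonlinearities, of the feature-wise concatenations, of the spectral stream via $\Delta \mapsto P_\pi \Delta P_\pi^{\top}$, and of the attention-weighted sums under permuted neighborhoods --- is left implicit in the paper, even though it is precisely the hypothesis the paper's one-step argument relies on. Your compositional decomposition therefore subsumes the published proof and closes its main gap; in particular, your observation that the softmax normalization over a neighborhood set is the one genuinely delicate point (handled by noting $N_{\pi(i)} = \pi(N_i)$ and that the neighborhood mask itself conjugates by $P_\pi$) is exactly the part a careful referee would want spelled out, as is your caveat that $\phi_{\text{mask}}$ only commutes with permutations that preserve the node/supernode partition. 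The trade-off is length: the paper's version is a two-line verification that communicates the intuition, while yours is the argument one would need to actually certify the claim for the full map $f$ rather than for the Hadamard product in isolation.
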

\begin{proof} 
    Let $A = MLP^{1}(\hat{x}_{\kappa}|;|\hat{x}_{Z})$, and $B = (MLP^2(X_{HG_{\text{spectral}}}))$ are the matrices obtained by equation \ref{eq:2} with elements $a_{ij}$, $b_{ij}$ respectively. Let $a'_{ij} = a_{\pi(i)\pi(j)}$ and $b'_{ij} = b_{\pi(i)\pi(j)}$ are elements of matrics A, B after permutation $\pi$. The Hadamard product matrix (C) of elements results in $c'_{ij}= a'_{ij} \odot b'_{ij}$. The original Hadamard Product $C=A \odot B$ has elements $c_{ij}=a_{ij} \odot b_{ij}$, applying permutation $\pi$ to C will result in the same operation as with $\pi$ to original matrices, i.e. $c_{\pi(i)\pi(j)} = a_{\pi(i)\pi(j)} \odot b_{\pi(i)\pi(j)}$. Hence, function $f$ is permutation equivariant. 
\end{proof}

\subsection{Expressive Power of DPHGNN} \label{ss: 4.1}
The expressive power of HGNNs is determined by their ability to learn a function on hypergraph that can distinguish two non-isomorphic hypergraphs \cite{c:32} and their local substructures. \cite{c:13} formulated a variant of the Generalized Weisfeiler Leman (1-GWL) test for measuring the expressive power of UniGNNs, following \cite{c:16}. However, the increase in symmetry from automorphisms in graph/hypergraph structure GWL inherits several failure cases in distinguishing complex substructures. In Proposition \ref{prop:4.2}, we generalize the 1-GWL test for spatial HGNNs which follow message passing (c.f. Eq. \ref{eq:1}) through the lens of the hypergraph color refinement algorithm.
We then analyze the expressive power of DPHGNN in theorem \ref{th:4.3} and prove that providing explicit representation information of underlying graph structure helps break automorphisms \cite{c:38} via learning equivariant functions.

\begin{proposition}
     Given a function $f_{\theta} \in F^{\text{HGNN}}$ (learned by Eq. \ref{eq:1}), and two non-isomorphic hypergraphs, $H_{1}$ and $H_{2}$, $f_{\theta}$ can distinguish $f_{\theta}(H_{1}) \neq f_{\theta}(H_{2})$ if and only if for some $t>0$, the updated coloring 
$H C_{(t)}\left(V_1, H_1\right) \neq H C_{(t)}\left(V_2, H_2\right)$.
\label{prop:4.2}
\end{proposition}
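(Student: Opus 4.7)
\medskip
\noindent\textbf{Proof proposal.} The plan is to mirror the classical GIN-style equivalence between message-passing networks and the 1-WL test, adapted to the two-level aggregation that appears in Eq.~\ref{eq:1} (inner aggregation over vertices in a hyperedge, outer aggregation over hyperedges incident to a vertex). I would first fix notation for the hypergraph color refinement procedure $HC_{(t)}$: at step $t=0$ each hypernode $v$ receives a color from its initial feature $x_v^{(0)}$, and at step $t$ the color is updated via an injective hash $HC_{(t)}(v) = \mathrm{hash}\!\left( HC_{(t-1)}(v), \{\!\{ \{\!\{ HC_{(t-1)}(u) : u \in e \}\!\} : e \in E_v \}\!\} \right)$. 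The two directions of the biconditional are then proved separately, one by induction (HGNN cannot exceed $HC$), the other by construction (suitable $f_\theta$ matches $HC$).

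\medskip
\noindent\textbf{Forward direction} ($f_\theta(H_1)\neq f_\theta(H_2) \Rightarrow \exists t : HC_{(t)}(V_1,H_1)\neq HC_{(t)}(V_2,H_2)$). I would argue by contrapositive. Assume the colorings of $H_1$ and $H_2$ coincide as multisets for every $t\ge 0$. By induction on the layer $\ell$ I would show the stronger statement: whenever $HC_{(\ell)}(v)=HC_{(\ell)}(v')$ for $v\in V_1, v'\in V_2$, the corresponding hidden representations satisfy $x_v^{(\ell)}=x_{v'}^{(\ell)}$. The base case is immediate since the color at $t=0$ is a function of $x^{(0)}$. For the inductive step, two nodes with the same refined color necessarily have the same nested multiset of predecessor colors; by the induction hypothesis they therefore have the same nested multiset of representations, and since $\mathrm{Aggr}$ and $\mathrm{Update}$ are deterministic functions of their arguments, their new representations coincide. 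Summing (or otherwise pooling) these equal multisets gives $f_\theta(H_1)=f_\theta(H_2)$, the desired contradiction.

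\medskip
\noindent\textbf{Backward direction} ($\exists t : HC_{(t)}(V_1,H_1)\neq HC_{(t)}(V_2,H_2) \Rightarrow \exists \theta : f_\theta(H_1)\neq f_\theta(H_2)$). Here I would realize each layer of $HC$ by a concrete choice of parameters. Following the Deep Sets / GIN argument, since the feature domain is countable, one can choose the inner aggregator as a sum over $u\in e$ composed with an MLP $\phi_{\text{in}}$ such that $\phi_{\text{in}}$ is an injective encoding of the multiset $\{\!\{ x_u^{(\ell-1)} : u \in e \}\!\}$; and analogously the outer aggregator as a sum over $e\in E_v$ composed with an MLP $\phi_{\text{out}}$ injective on multisets of hyperedge encodings. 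The $\mathrm{Update}$ step is realized by a further MLP injective in $(x_v^{(\ell-1)},\cdot)$. By universal approximation these MLPs exist in $F^{\text{HGNN}}$. A short induction then shows that the representation $x_v^{(\ell)}$ is, up to a bijection, equal to $HC_{(\ell)}(v)$. Finally, applying an injective readout over the multiset of node representations yields $f_\theta(H_1)\neq f_\theta(H_2)$ as soon as the color histograms diverge at some $t$.

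\medskip
\noindent\textbf{Main obstacle.} The delicate point is the backward direction's nested injectivity: the inner aggregator must injectively encode multisets \emph{of features}, and the outer aggregator must injectively encode multisets \emph{of those encodings}. A single level (as in GIN) is handled by the existence of an irrational-coefficient sum that is injective on countable multisets; extending this to two levels requires ensuring that the range of $\phi_{\text{in}}$ is still a countable set on which a second injective sum can be defined. I would handle this by appealing to Deep Sets over countable universes inductively, noting that compositions of MLPs preserve the countability of the image and that $F^{\text{HGNN}}$ is closed under such compositions. Once this two-level injectivity is established, the rest of the argument is a straightforward simulation of the color refinement algorithm.
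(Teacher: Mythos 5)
Your proposal is correct and follows essentially the same route as the paper's proof: both establish a layer-by-layer correspondence between the two-level aggregation of Eq.~\ref{eq:1} and the hypergraph color-refinement iteration, with injectivity of the aggregation/update maps as the key ingredient. The paper packages both directions into a single induction constructing an injective map $\Gamma^{(t,l)}$ from colorings to hidden states (simply assuming the component maps $\phi_1,\phi_2,\phi_3$ are injective), whereas you separate the two directions explicitly and are more careful about realizing the nested injective multiset encoders via Deep Sets over countable domains and about initializing the coloring from the node features --- points the paper leaves implicit.
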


\paragraph{Proof Sketch.} 
Here, we provide the skeleton of the proof (see \ref{proof:p2} Supplementary for the detailed proof).  We first generalize the color refinement for hypergraphs by providing tensor representations for colors. We then establish an injective mapping between the  HASH function of iterative hypergraph color refinement (c.f. Eq. \ref{eq:5}) and general spatial HGNNs update (c.f. Eq. \ref{eq:1}) 
\begin{multline}\label{eq:5}
HC_{t+1}^H(v)=\text{HASH}\{\{\{{H C_t^H(u) \mid u \in f_H(e)}\} \\ \mid e \in \xi \ \text{with}\ v \in f_H(e)\}\}
\end{multline}

\begin{theorem}
        Given a function $g_{\theta} \in F^{\text{DPHGNN}}$ (learned by Eq. \ref{eq:4}), and two non-isomorphic hypergraphs $H_{1}$ and $H_{2}$ that can be distinguished by 3-GWL test, there is at least a function $g_{\theta}$ that can also decide $g_{\theta}(H_{1}) \neq g_{\theta}(H_{2})$ if $g_{\theta}$ is equipped with the following structure: (i) $g_{\theta}$ is an order invariant function learned by the composition of $k$-equivariant layer $(k\geq3)$ HGNN network. (ii) Aggregation and Readout functions of DPHGNN are injective.
        \label{th:4.3}
\end{theorem}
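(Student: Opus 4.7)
The plan is to prove the theorem by lifting the $1$-GWL analysis of Proposition~\ref{prop:4.2} to a $3$-GWL analysis, using the theory of $k$-equivariant networks at $k=3$. The overall strategy decomposes into three stages: first, recast $3$-GWL on hypergraphs as a tensor-valued color-refinement scheme on ordered triples; second, show that under hypotheses (i) and (ii) a sufficiently deep DPHGNN stack simulates one step of this refinement per layer; third, use the injective readout to convert distinguishable colorings into distinguishable hypergraph-level outputs. Proposition~\ref{prop:4.2} has already supplied the $1$-GWL version of the second stage via the HASH/update correspondence of Eq.~(\ref{eq:5}); the task here is to carry out the analogous correspondence for order-$3$ tensors.

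First I would define the $3$-GWL refinement on the ordered triples $(u,v,w) \in V^3$ by assigning an initial color encoding the induced hyperedge incidence pattern of the triple together with its attached hypernode features, and then iterating the standard neighbor-substitution hash: each triple's new color is obtained by hashing the multiset of colors of triples obtained by replacing one coordinate with every other node, mirroring the injective HASH construction in Eq.~(\ref{eq:5}). Next, I would establish a simulation lemma stating that a single $k$-equivariant DPHGNN layer with $k \geq 3$ reproduces one iteration of this refinement. The argument combines three ingredients: Proposition~\ref{prop:4.1}, whose Hadamard-product equivariance proof extends verbatim to order-$k$ tensor indices, showing that the feature mixture block is equivariant at that order; the structure of Eq.~(\ref{eq:4}), in which the static term $X_{\text{static}}$, the incidence aggregation $H D_e^{-1} H^T D_v^{-1/2} X_{\text{static}}$, and the dynamic fusion term $\phi_{\text{mask}}\{A X_{G_{*}}\}$ together inject the incidence, spectral and explicit pairwise signals required by the $3$-GWL initial color and neighbor substitution; and standard universality for $k$-equivariant linear layers composed with an injective multiset aggregator, which yields realizability of every equivariant multiset-to-tensor map.

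Given the simulation lemma, induction on the number of layers shows that the tuple-wise outputs of $g_\theta$ are at least as fine as the stable $3$-GWL coloring on both $H_1$ and $H_2$. Since $H_1$ and $H_2$ are assumed $3$-GWL distinguishable, their stable $3$-GWL color multisets differ; applying the injective readout of hypothesis (ii) then maps these multisets to distinct global representations, giving $g_\theta(H_1) \neq g_\theta(H_2)$. The existence statement in the theorem is realized by fixing, inside this constructive simulation, any parameterization of the mixer, aggregator and readout that witnesses the injective multiset-hashing step provided by the universality result.

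The main obstacle will be the simulation lemma, and more specifically the step that verifies that the concrete DPHGNN update in Eq.~(\ref{eq:4})---and not merely an abstract $k$-equivariant layer---already carries enough order-$3$ interaction to emulate a $3$-GWL neighbor substitution. The role of the dynamic feature fusion term $\phi_{\text{mask}}\{A X_{G_{*}}\}$, drawn from the star expansion, is pivotal: as the discussion preceding the theorem emphasizes, it supplies explicit pairwise signals from the underlying lower-order graph that break the automorphism groups fooling $1$-GWL on hypergraphs. Converting this \emph{automorphism-breaking} intuition into a rigorous statement that the fused update carries the third tensor coordinate required for $3$-GWL simulation---rather than only a second one, which would collapse back to Proposition~\ref{prop:4.2}---is the delicate point, and the hypothesis $k \geq 3$ is exactly what opens the tensor dimension that makes the lift possible.
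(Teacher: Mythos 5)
Your skeleton---cast DPHGNN as a composition of equivariant layers, relate its per-layer update to a WL-type color refinement via an injective HASH correspondence, and finish with an injective readout on the resulting multisets---matches the paper's proof at a high level, and your use of Proposition~\ref{prop:4.2} as the base case and of the universality of $k$-order equivariant networks mirrors the paper's appeal to \citet{c:17}. But the decisive step is handled by a genuinely different route, and in your version that step is left open. You propose a simulation lemma in which a single layer of Eq.~\eqref{eq:4} reproduces one iteration of a $3$-GWL refinement defined directly on ordered triples in $V^3$, and you yourself flag that verifying this for the \emph{concrete} update---rather than for an abstract order-$3$ equivariant layer---is the main obstacle. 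That obstacle is real: Eq.~\eqref{eq:4} propagates order-$1$ node-feature matrices in $\mathbb{R}^{n\times d}$ through fixed incidence and adjacency operators, so it does not manifestly maintain or update a coloring indexed by $V^3$, and no amount of per-layer induction on node-level features will by itself produce the third tensor coordinate your refinement requires. As written, the proposal identifies the gap but does not close it.

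The paper closes it differently: rather than simulating $3$-GWL on triples, it (Part 1) argues that the DFF update, with the bijective $\phi_{\text{mask}}$, composes into a $k$-order invariant network $F=m\circ h\circ L_d\circ\sigma\circ\cdots\circ L_1$ and invokes the equivalence of such networks with the $k$-WL test; then (Part 2) it observes that the two-stage aggregation $m_e^{(l-1)}=\operatorname{Aggr}(\{\{m_u\}\}\cup\{\{m_p\}\})$, where the supernode features $m_p$ come from the star expansion, enlarges the color tensor to $\mathbb{R}^{(n^k\times a)+b}$ via a composed coloring map $\psi_H=(f_G\circ f_H)$, turning the refinement into a \emph{Folklore}-GWL one, and then rides the $k$-FWL $\equiv(k{+}1)$-WL hierarchy up to $3$-GWL. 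In other words, the extra expressive coordinate you are trying to extract from Eq.~\eqref{eq:4} by direct tensor simulation is obtained in the paper by recognizing that the supernode fusion changes the \emph{type} of the refinement (WL $\to$ FWL) rather than its order, after which the lift to $3$-GWL is a cited hierarchy theorem rather than a constructed simulation. If you want to complete your version, you would need either to prove that the specific operators in Eq.~\eqref{eq:4} realize an injective map on triple colorings (which they do not obviously do), or to adopt the paper's FWL reformulation of the dynamic fusion step.
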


\if 0
\begin{theory}
    {\textbf{Theorem 1:}} Given a function $g_{\theta} \in F^{\text{DPHGNN}}$ (learned by Eq. \ref{eq:4}), and two non-isomorphic hypergraphs $H_{1}$ and $H_{2}$ that can be distinguished by 3-GWL test, there is at least a function $g_{\theta}$ that can also decide $g_{\theta}(H_{1}) \neq g_{\theta}(H_{2})$ if $g_{\theta}$ is equipped with the following structure: (i) $g_{\theta}$ is an order invariant function learned by the composition of $k$-equivariant layer $(k\geq3)$ HGNN network. (ii) Aggregation and Readout functions of DPHGNN are injective.
\end{theory}

\paragraph{\textbf{Theorem 1:}} Given a function $g_{\theta} \in F^{\text{DPHGNN}}$ (learned by Eq. \ref{eq:4}), and two non-isomorphic hypergraphs $H_{1}$ and $H_{2}$ that can be distinguished by 3-GWL test, there is at least a function $g_{\theta}$ that can also decide $g_{\theta}(H_{1}) \neq g_{\theta}(H_{2})$ if $g_{\theta}$ is equipped with the following structure: (i) $g_{\theta}$ is an order invariant function learned by the composition of $k$-equivariant layer $(k\geq3)$ HGNN network. (ii) Aggregation and Readout functions of DPHGNN are injective.
\fi

\paragraph{Proof Sketch.} Here, we provide the skeleton of the proof (see \ref{proof:t1} Supplementary for the detailed proof). We first build upon the work of \citet{c:17, c:33} to generalize the notion of the $F$-WL (Folklore-WL) test \cite{c:39} for hypergraphs. We then argue that using explicit feature representations from underlying graph structure as the composition of equivariant layers is at least as powerful as the $F$-GWL test. We then prove the second part of the theorem.

\if 0
\begin{table}[!t]
\centering
\resizebox{0.9\columnwidth}{!}{
\begin{tabular}{lcccccc}
\hline
Dataset     & $|V|$  & $|\xi|$   & $|\mu|$  & $|M|$     & $d$    & $C$ \\ \hline
CA-DBLP     & 41,302  & 22,363  & 1.08 & 4.7  & 1425 & 6 \\
CA-Cora     & 2708 & 1072 & 0.79 & 4.2 & 1433 & 7 \\
CC-Citeseer & 3327  & 1079  & 0.64 & 3.2  & 3703 & 6 \\
CC-Cora     & 2708  & 1579  & 1.1 & 3.0 & 1433 & 7 \\
CO-RTO       & 66790 & 27528 & 0.82 & 1.6±  & 10   & 2 \\ \hline
\end{tabular}}
\caption{Dataset statistics: $|V|$ and $|\xi|$ represent the number of hypernodes and hyperedges, respectively. $|\mu|= \frac{2|\xi|}{|V|}$ is the average hyperedge density, and $|M|= \frac{1}{|\xi|} \sum_{e \in \xi}\nolimits |e|$ is the average hypernode density of hyperedges; $d$ and $C$ are the dimension of features and the number of classes, respectively.}
\label{table2}
\vspace{-5mm}
\end{table}
\fi

\begin{table}[!t]
\centering
\resizebox{0.9\columnwidth}{!}{
\begin{tabular}{lcccccc}
\hline
Dataset         & $|V|$ & $|\xi|$  & $|\mu|$ & $|M|$      & $d$   & $C$ \\ \hline
CA-DBLP         & 2708  & 1072   & 0.79 & 4.2   & 1433 & 7  \\
CA-Cora         & 43413 & 22535  & 1.03 & 4.7   & 1425 & 6  \\
CC-Citeseer     & 2708  & 1579   & 1.16 & 3.0   & 1433 & 6  \\
CC-Cora         & 3312  & 1079   & 1.03 & 3.2   & 3703 & 6  \\
YelpRestaurant  & 50758 & 679302 & 26.7 & 6.6   & 1862 & 11 \\
HouseCommittees & 1290  & 341    & 0.52 & 34.7 & 1290 & 3  \\
Cooking200      & 7403  & 2755   & 0.74 & 19.9 & 7403 & 20 \\
News20          & 16342 & 100    & 0.01 & 654.5        & 100  & 4  \\ 
CO-RTO          & 66790 & 27528  & 0.82 & 1.6   & 10   & 2  \\\hline
\end{tabular}}
\caption{Dataset statistics: $|V|$ and $|\xi|$ represent the number of hypernodes and hyperedges, respectively. $|\mu|= \frac{2|\xi|}{|V|}$ is. the average hyperedge density, and $|M|= \frac{1}{|\xi|} \sum_{e \in \xi}\nolimits |e|$ is the average hypernode density of hyperedges; $d$ and $C$ are the dimension of features and the number of classes, respectively.}
\label{table2}
\vspace{-5mm}
\end{table}

\if 0
\begin{table*}
\centering
\resizebox{2.1\columnwidth}{!}{
\begin{tabular}{l|ccc|ccc|ccc|ccc}
\hline
\multirow{2}{*}{\bf Method} & \multicolumn{1}{l}{\textbf{}} & \multicolumn{1}{l}{\textbf{CA-DBLP}} & \multicolumn{1}{l|}{\textbf{}} & \multicolumn{1}{l}{\textbf{}} & \multicolumn{1}{l}{\textbf{CA-Cora}} & \multicolumn{1}{l|}{\textbf{}} & \multicolumn{1}{l}{\textbf{}} & \multicolumn{1}{l}{\textbf{CC-Citeseer}} & \multicolumn{1}{l|}{\textbf{}} & \multicolumn{1}{l}{\textbf{}} & \multicolumn{1}{l}{\textbf{CC-Cora}} \vspace{0.3mm} & \multicolumn{1}{l}{\textbf{}} \\ \cline{2-13}\vspace{0.2mm}
  & Mean Acc                      & Macro F1                             & Micro F1                       & Mean Acc                      & Macro F1                             & Micro F1                       & Mean Acc                      & Macro F1                                 & Micro F1                       & Mean Acc                      & Macro F1                             & Micro F1                      \\  
\hline

 {HGNN}     & 68.24±9.8                     & 67.61±8.5                            & 68.24±9.9                      & 63.92±4.2                     & 53.88±5.4                            & 64.01±5.2                      & \textcolor{blue}{64.56±4.3}            & \textcolor{blue}{60.52±1.1}                       & \textcolor{blue}{64.34±4.2}            & \textcolor{red}{69.38±5.8}            & \textcolor{red}{69.03±3.9}                   & \textcolor{red}{68.92±6.4}            \\
 {HGNN+}    & \textcolor{brown}{84.66±5.1}            & 81.12±1.8                   & \textcolor{brown}{84.66±5.1}          & 68.76±4.8                     & 61.90±5.2                            & 69.76±3.6                      & 42.02±5.3                     & 37.21±3.0                                & 42.02±6.2                      & 46.26±5.2                     & 49.3±3.2                             & 46.26±5.2                     \\
 {HyperGCN} & 84.03±5.2                     & 83.29±4.2                            & 84.01±6.2                      & 63.91±7.1                     & 51.22±9.2                            & 68.99±5.5                       & 63.2±1.8             & \textcolor{brown}{60.24±2.2}                     & 63.2±1.1            & 67.87±2.1            & 66.52±4.9                  & 67.87±2.1          \\
 {HNHN}     & 65.32±1.4                     & 61.09±2.3                            & 65.32±1.5                      & 65.28±4.2                     & 54.11±7.3                            & 68.91±4.8                      & 33.45±4.5                     & 31.65±3.9                                & 33.45±4.5                      & 48.03±4.3                     & 42.55±0.5                            & 48.03±4.3                     \\
 {UniGCN}   & \textcolor{blue}{88.07±0.1}            & \textcolor{blue}{86.03±0.4}                   & \textcolor{blue}{88.07±0.2}            & 71.68±4.4                     & 57.31±6.2                            & 72.74±1.3            & 52.47±8.2                     & 47.54±4.8                                & 52.47±8.2                      & 61.1±7.4                      & 55.24±8.2                            & 61.1±7.4                      \\
 {UniSAGE}  & 87.97±0.2            & 85.37±0.1                   & 87.96±0.4             & \textcolor{blue}{72.54±3.4}            & \textcolor{blue}{57.26±6.4}                            & \textcolor{blue}{74.69±3.4}             & 57.59±7.5                     & 55.2±3.2                                 & 57.59±7.2                      & 64.87±6.0           & 63.26±1.5                   & 64.87±6.0            \\
 {UniGAT}   & 86.69±0.1                     & 84.24±1.8                            & 87.97±1.1                      & 72.48±2.8           & 58.69±4.5              & 75.2±1.8            & 52.68±8.4           & 46.95±2.9                                & 51.96±8.4             & 55.83±6.5                     & 51.58±4.2                            & 55.83±6.5                     \\\hdashline
  {DPHGNN}   & \textcolor{red}{89.06±1.2}            & \textcolor{red}{86.48±2.5}                 & \textcolor{red}{87.16±1.8}            & \textcolor{red}{75.12±1.2}            & \textcolor{red}{68.44±1.9}                   & \textcolor{red}{78.94±3.3}             & \textcolor{red}{66.75±0.2}                & \textbf{\textcolor{brown}{59.52±1.5}}                                &  \textcolor{red}{67.28±0.4}           & \textbf{\textcolor{brown}{67.51±6.2}}           & \textbf{\textcolor{brown}{67.55±5.2}}                   & \textbf{\textcolor{brown}{68.51±3.4}}            \\\hline
\end{tabular}
}
\caption{Performance comparison on the benchmark  hypergraph datasets over mean accuracy(\%), macro F1-score, and micro F1-score (± standard deviation). For every dataset, the best performance is highlighted in \textcolor{red}{red}, the best baseline is highlighted in \textcolor{blue}{blue}, results within 1-std. dev. are highlighted in \textcolor{brown}{brown}.}
\label{table3}
\vspace{-4mm}
\end{table*}
\fi 

\begin{table*}
\centering
\resizebox{2.0\columnwidth}{!}{
\begin{tabular}{ccccccccccccc}
\hline
\multicolumn{1}{c|}{\bf Methods}                        &                                  & {\bf CA-DBLP}                                                   & \multicolumn{1}{c|}{}                                 &                                  & {\bf CA-Cora}                                                   & \multicolumn{1}{c|}{}                                 &                                  & {\bf CC-Citeseer}                      & \multicolumn{1}{c|}{}                                 &                                  & {\bf CC-Cora}                          &                                  \\ \cline{2-13} 
\multicolumn{1}{c|}{}                              & Mean Acc                         & Macro F1                                                  & \multicolumn{1}{c|}{Micro F1}                         & Mean Acc                         & Macro F1                                                  & \multicolumn{1}{c|}{Micro F1}                         & Mean Acc                         & Macro F1                         & \multicolumn{1}{c|}{Micro F1}                         & Mean Acc                         & Macro F1                         & Micro F1                         \\ \hline
\multicolumn{1}{c|}{HGNN}                          & 68.24±9.8                        & 67.61±8.5                                                 & \multicolumn{1}{c|}{68.24±9.9}                        & 63.92±4.2                        & 53.88±5.4                                                 & \multicolumn{1}{c|}{64.01±5.2}                        & {\color[HTML]{6434FC} 68.56±0.3} & {\color[HTML]{6434FC} 60.52±1.1} & \multicolumn{1}{c|}{{\color[HTML]{6434FC} 68.56±0.2}} & {\color[HTML]{FE0000} 69.38±3.8} & {\color[HTML]{FE0000} 69.03±3.9} & {\color[HTML]{FE0000} 68.92±5.4} \\
\multicolumn{1}{c|}{HGNN+}                         & {\color[HTML]{CD9934} 84.66±5.1} & 81.12±1.8                                                 & \multicolumn{1}{c|}{{\color[HTML]{CD9934} 84.66±5.1}} & 68.76±4.8                        & 61.90±5.2                                                 & \multicolumn{1}{c|}{69.76±3.6}                        & 42.02±5.3                        & 37.21±3.0                        & \multicolumn{1}{c|}{42.02±6.2}                        & 46.26±5.2                        & 49.3±3.2                         & 46.26±5.2                        \\
\multicolumn{1}{c|}{HyperGCN}                      & 84.03±5.2                        & 83.29±4.2                                                 & \multicolumn{1}{c|}{84.01±6.2}                        & 63.91±7.1                        & 51.22±9.2                                                 & \multicolumn{1}{c|}{68.99±5.5}                        & 63.2±1.8                         & {\color[HTML]{CD9934} 60.24±2.2} & \multicolumn{1}{c|}{63.2±1.1}                         & {\color[HTML]{6434FC} 67.07±2.1} & {\color[HTML]{6434FC} 66.52±4.9} & {\color[HTML]{6434FC} 67.87±2.1} \\
\multicolumn{1}{c|}{HNHN}                          & 65.32±1.4                        & 61.09±2.3                                                 & \multicolumn{1}{c|}{65.32±1.5}                        & 65.28±4.2                        & 54.11±7.3                                                 & \multicolumn{1}{c|}{68.91±4.8}                        & 33.45±4.5                        & 31.65±3.9                        & \multicolumn{1}{c|}{33.45±4.5}                        & 48.03±4.3                        & 42.55±0.5                        & 48.03±4.3                        \\
\multicolumn{1}{c|}{UniGCN}                        & {\color[HTML]{6434FC} 88.07±0.1} & {\color[HTML]{6434FC} 86.03±0.4}                          & \multicolumn{1}{c|}{{\color[HTML]{6434FC} 88.07±0.2}} & {\color[HTML]{CD9934} 71.68±4.4} & {\color[HTML]{CD9934} 57.31±6.2}                          & \multicolumn{1}{c|}{{\color[HTML]{CD9934} 79.74±1.3}} & 52.47±8.2                        & 47.54±4.8                        & \multicolumn{1}{c|}{52.47±8.2}                        & 61.1±7.4                         & 55.24±8.2                        & 61.1±7.4                         \\
\multicolumn{1}{c|}{UniSAGE}                       & 87.97±0.2                        & 85.37±0.1                                                 & \multicolumn{1}{c|}{87.96±0.4}                        & 72.54±3.4                        & 57.26±6.4                                                 & \multicolumn{1}{c|}{78.69±3.4}                        & 57.59±7.5                        & 55.2±3.2                         & \multicolumn{1}{c|}{57.59±7.2}                        & 64.87±6.0                        & 63.26±1.5                        & 64.87±6.0                        \\
\multicolumn{1}{c|}{UniGAT}                        & 86.69±0.1                        & 84.24±1.8                                                 & \multicolumn{1}{c|}{87.97±1.1}                        & 72.48±2.8                        & 58.69±4.5                                                 & \multicolumn{1}{c|}{75.2±1.8}                         & 52.68±8.4                        & 46.95±2.9                        & \multicolumn{1}{c|}{51.96±8.4}                        & 55.83±6.5                        & 51.58±4.2                        & 55.83±6.5                        \\ \hline
\multicolumn{1}{c|}{{\color[HTML]{333333} DPHGNN}} & {\color[HTML]{FE0000} 89.06±1.2} & {\color[HTML]{FE0000} 86.48±2.5}                          & \multicolumn{1}{c|}{{\color[HTML]{FE0000} 87.16±1.8}} & {\color[HTML]{FE0000} 75.12±1.2} & {\color[HTML]{FE0000} 68.44±1.9}                          & \multicolumn{1}{c|}{{\color[HTML]{FE0000} 78.94±3.3}} & {\color[HTML]{FE0000} 65.77±4.2} & 56.52±3.5                        & \multicolumn{1}{c|}{{\color[HTML]{FE0000} 63.77±5.4}} & {\color[HTML]{CD9934} 67.51±6.2} & {\color[HTML]{CD9934} 64.55±8.2} & {\color[HTML]{CD9934} 68.51±3.4} \\ \hline
\multicolumn{1}{l}{}                               & \multicolumn{1}{l}{}             & \multicolumn{1}{l}{}                                      & \multicolumn{1}{l}{}                                  & \multicolumn{1}{l}{}             & \multicolumn{1}{l}{}                                      & \multicolumn{1}{l}{}                                  & \multicolumn{1}{l}{}             & \multicolumn{1}{l}{}             & \multicolumn{1}{l}{}                                  & \multicolumn{1}{l}{}             & \multicolumn{1}{l}{}             & \multicolumn{1}{l}{}             \\ \hline
\multicolumn{1}{c|}{\bf Methods}                        &                                  & \begin{tabular}[c]{@{}c@{}}\bf Yelp Restaurant\end{tabular} & \multicolumn{1}{c|}{}                                 &                                  & \begin{tabular}[c]{@{}c@{}}\bf House Committees
\end{tabular} & \multicolumn{1}{c|}{}                                 &                                  & \bf Cooking200                       & \multicolumn{1}{c|}{}                                 &                                  & \bf News20                           &                                  \\ \cline{2-13} 
\multicolumn{1}{c|}{}                              & Mean Acc                         & Macro F1                                                  & \multicolumn{1}{c|}{Micro F1}                         & Mean Acc                         & Macro F1                                                  & \multicolumn{1}{c|}{Micro F1}                         & Mean Acc                         & Macro F1                         & \multicolumn{1}{c|}{Micro F1}                         & Mean Acc                         & Macro F1                         & Micro F1                         \\ \hline
\multicolumn{1}{c|}{HGNN}                          & 30.69±3.2                        & {\color[HTML]{6434FC} 14.37±6.7}                          & \multicolumn{1}{c|}{28.24±2.8}                        & {\color[HTML]{333333} 52.33±5.1} & {\color[HTML]{333333} 50.06±5.5}                          & \multicolumn{1}{c|}{{\color[HTML]{333333} 50.03±3.2}} & {\color[HTML]{6434FC} 53.45±2.6} & {\color[HTML]{6434FC} 40.91±5.8} & \multicolumn{1}{c|}{{\color[HTML]{6434FC} 50.83±2.9}} & 80.46±3.2                        & 78.74±0.4                        & 80.46±2.4                        \\
\multicolumn{1}{c|}{HGNN+}                         & {\color[HTML]{6434FC} 32.92±1.8} & 12.84±4.2                                                 & \multicolumn{1}{c|}{{\color[HTML]{6434FC} 30.53±1.5}} & 53.1±5.9                         & 52.58±4.2                                                 & \multicolumn{1}{c|}{51.27±3.5}                        & 49.64±1.2                        & {\color[HTML]{CD9934} 40.07±8.6} & \multicolumn{1}{c|}{48.27±1.5}                        & {\color[HTML]{CD9934} 81.25±2.8} & {\color[HTML]{CD9934} 78.89±1.1} & {\color[HTML]{CD9934} 81.25±1.2} \\
\multicolumn{1}{c|}{HyperGCN}                      & 30.77±1.1                        & 13.3±2                                                    & \multicolumn{1}{c|}{28.41±3.1}                        & NA                               & NA                                                        & \multicolumn{1}{c|}{NA}                               & 9.32±2.8                         & 4.28±1.6                         & \multicolumn{1}{c|}{6.64±2.6}                         & 81.16±0.8                        & 78.95±0.5                        & 80.16±1.8                        \\
\multicolumn{1}{c|}{HNHN}                          & 26.61±4.4                        & 7.64±1.2                                                  & \multicolumn{1}{c|}{24.75±2.5}                        & 49.61±4.2                        & 47.96±1.4                                                 & \multicolumn{1}{c|}{46.68±4.2}                        & 25.29±4.7                        & 20.1±4.7                         & \multicolumn{1}{c|}{23.33±3.3}                        & 76.87±5.4                        & 78.75±0.3                        & 81.13±0.6                        \\
\multicolumn{1}{c|}{UniGCN}                        & 28.56±2.5                        & 10.38±0.9                                                 & \multicolumn{1}{c|}{26.6±4.2}                         & {\color[HTML]{CD9934} 56.04±5.5} & {\color[HTML]{CD9934} 54.87±2.5}                          & \multicolumn{1}{c|}{{\color[HTML]{CD9934} 53.32±4.8}} & 49.49±3.3                        & 38.53±1.6                        & \multicolumn{1}{c|}{47.43±1.4}                        & 80.97±0.5                        & 78.82±0.1                        & 80.97±0.5                        \\
\multicolumn{1}{c|}{UniSAGE}                       & 27.3±0.4                         & 10.41±1.5                                                 & \multicolumn{1}{c|}{25.93±1.6}                        & {\color[HTML]{FE0000} 61.72±8.4} & 50.87±6.7                                                 & \multicolumn{1}{c|}{{\color[HTML]{FE0000} 61.21±0.5}} & {\color[HTML]{CD9934} 52.32±3.9} & 42.24±2.5                        & \multicolumn{1}{c|}{49.64±2.2}                        & {\color[HTML]{6434FC} 81.49±1.5} & {\color[HTML]{6434FC} 79.47±1.8} & {\color[HTML]{6434FC} 81.49±0.3} \\
\multicolumn{1}{c|}{UniGAT}                        & OOM                              & OOM                                                       & \multicolumn{1}{c|}{OOM}                              & {\color[HTML]{6434FC} 57.36±3.1} & {\color[HTML]{6434FC} 56.31±0.2}                          & \multicolumn{1}{c|}{{\color[HTML]{6434FC} 54.73±2.9}} & 34.99±1.8                        & 26.17±3.6                        & \multicolumn{1}{c|}{33.73±0.9}                        & 78.76±4.4                        & 78.15±0.3                        & 80.76±3.3                        \\ \hline
\multicolumn{1}{c|}{DPHGNN}                        & {\color[HTML]{FE0000} 35.82±1.4} & {\color[HTML]{FE0000} 20.55±0.3}                          & \multicolumn{1}{c|}{{\color[HTML]{FE0000} 36.33±1.2}} & {\color[HTML]{CD9934} 56.87±4.3} & 51.72±0.3                                                 & \multicolumn{1}{c|}{{\color[HTML]{CD9934} 54.33±3.3}} & {\color[HTML]{FE0000} 55.74±2.1} & {\color[HTML]{FE0000} 46.51±1.5} & \multicolumn{1}{c|}{{\color[HTML]{FE0000} 51.73±3.4}} & {\color[HTML]{FE0000} 81.57±1.3} & {\color[HTML]{FE0000} 79.2±0.5}  & {\color[HTML]{FE0000} 81.43±0.3} \\ \hline
\end{tabular}}
\caption{Performance comparison on the benchmark  hypergraph datasets over mean accuracy(\%), macro F1-score, and micro F1-score (± standard deviation). For every dataset, the best performance is highlighted in \textcolor{red}{red}, the best baseline is highlighted in \textcolor{blue}{blue}, results within 1-std. dev. are highlighted in \textcolor{brown}{brown}. NA indicates structural restriction for method; OOM is out of memory.}
\label{table3}
\vspace{-3mm}
\end{table*}

\section{Experiments}
This section presents benchmark datasets, baseline methods, comparative analyses and ablation studies of our model. 


 {\bf Benchmark Datasets.} We experiment with publicly available benchmark datasets -- Cora, DBLP, and Citeseer, extensively used in previous studies \cite{c:13, c:14}. Two standard ways to construct hypergraphs from these datasets result in coauthorship (CA) networks from Cora and DBLP and cocitation networks (CC) from Cora and Citeseer. Moreover, we also benchmark on the pre-constructed standard hypergraph datasets -- YelpRestaurant, HouseCommittees, Cooking200, and News20 \cite{c:41}. We utilize the hypergraph convolution layers from the DeepHyperGraph (DHG) package for our baseline experiments. Table \ref{table2} provides detailed statistics of these datasets.

{\bf Baseline Methods.}
We compare DPHGNN with seven standard baselines designed for semi-supervised node classification on hypergraphs. The baselines include a mix of spatial and spectral approaches for hypergraph representation learning -- 
(i) HGNN: It parametrizes the smoothing of features from HGNN Laplacian matrix \cite{c:06}; (ii) HGNN+: It uses adaptive hyperedge group fusion strategy to introduce spatial message passing for different modalities \cite{c:11}; (iii) HyperGCN: It introduces Laplacian operator to approximate underlying graph structure and perform message passing over lower-order graph \cite{c:14}; (iv) HNHN: Spatial HGNN with non-linear activation functions are applied over both hypernodes and hyperedges \cite{c:12}; UniGNN \cite{c:14} generalises spatial learning GNNs for hypergraphs and formulates various architectures such as (v) UniGCN, (vi) UniSAGE, and (vii) UniGAT.

{\bf Experimental Setup.}
For DPHGNN, we deploy a dual-layer training mechanism (see Figure \ref{fig:training_pipeline} in Supplementary), in which the static layer is trained to learn the underlying graph structural and hypergraph spectral inductive biases, and the dynamic layer is used to optimize learning on hypergraphs. 
We use a suitable GNN layer respective to the decomposed structure for graph structural learning and update the feature representation for the 2-hop neighborhood. Moreover, we use two-layer message passing for hypergraph structural learning, one for updating the neighbor hypernode features and the other for a class-wise prediction head. (Table \ref{table7} in Supplementary includes specific hyperparameter details for our experiments).

\begin{figure*}[!t]
    \centering
    \begin{minipage}{0.8\textwidth}
        \includegraphics[width=0.33\textwidth]{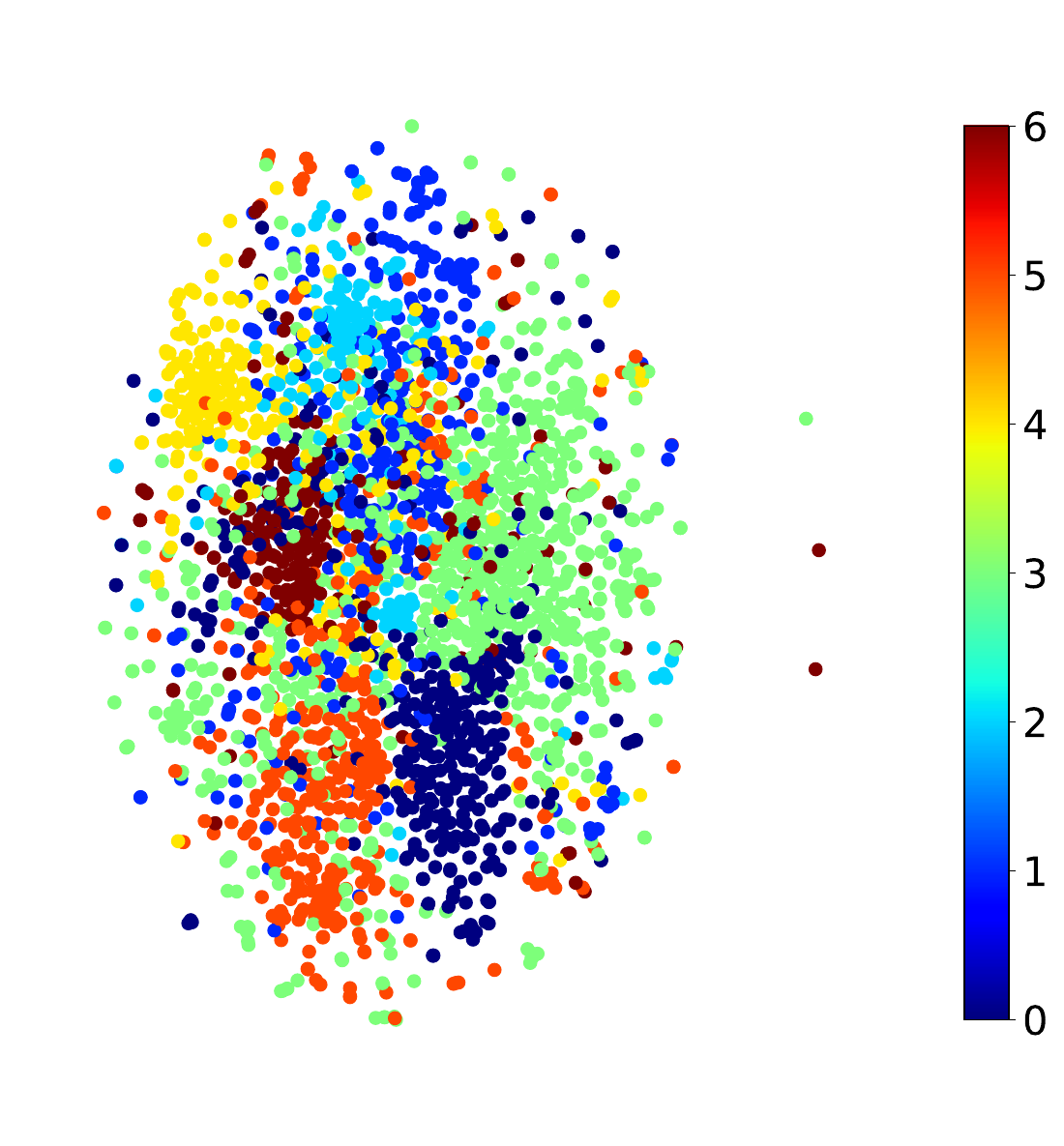}\hfill
        \includegraphics[width=0.33\textwidth]{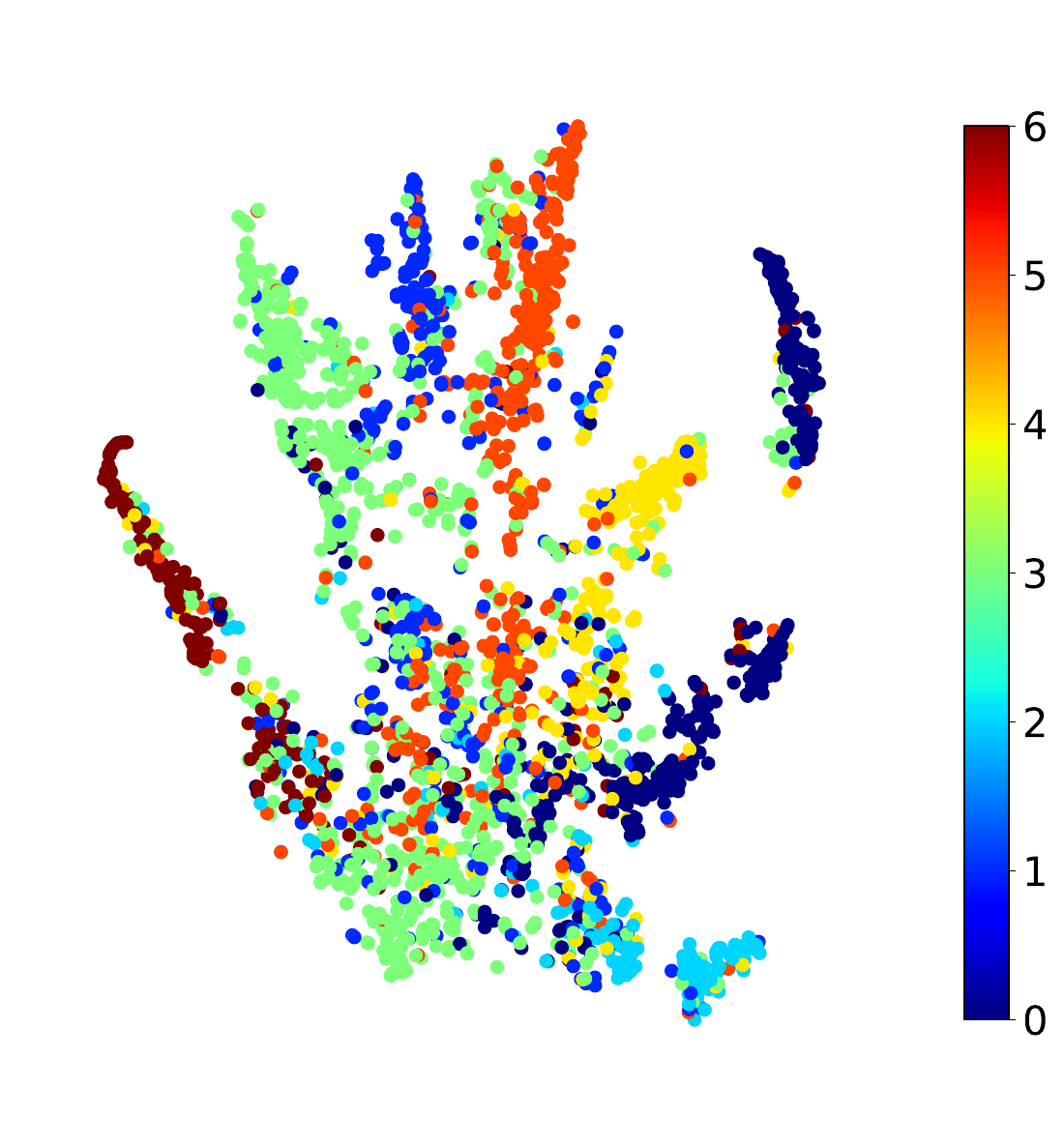}\hfill
        \includegraphics[width=0.33\textwidth]{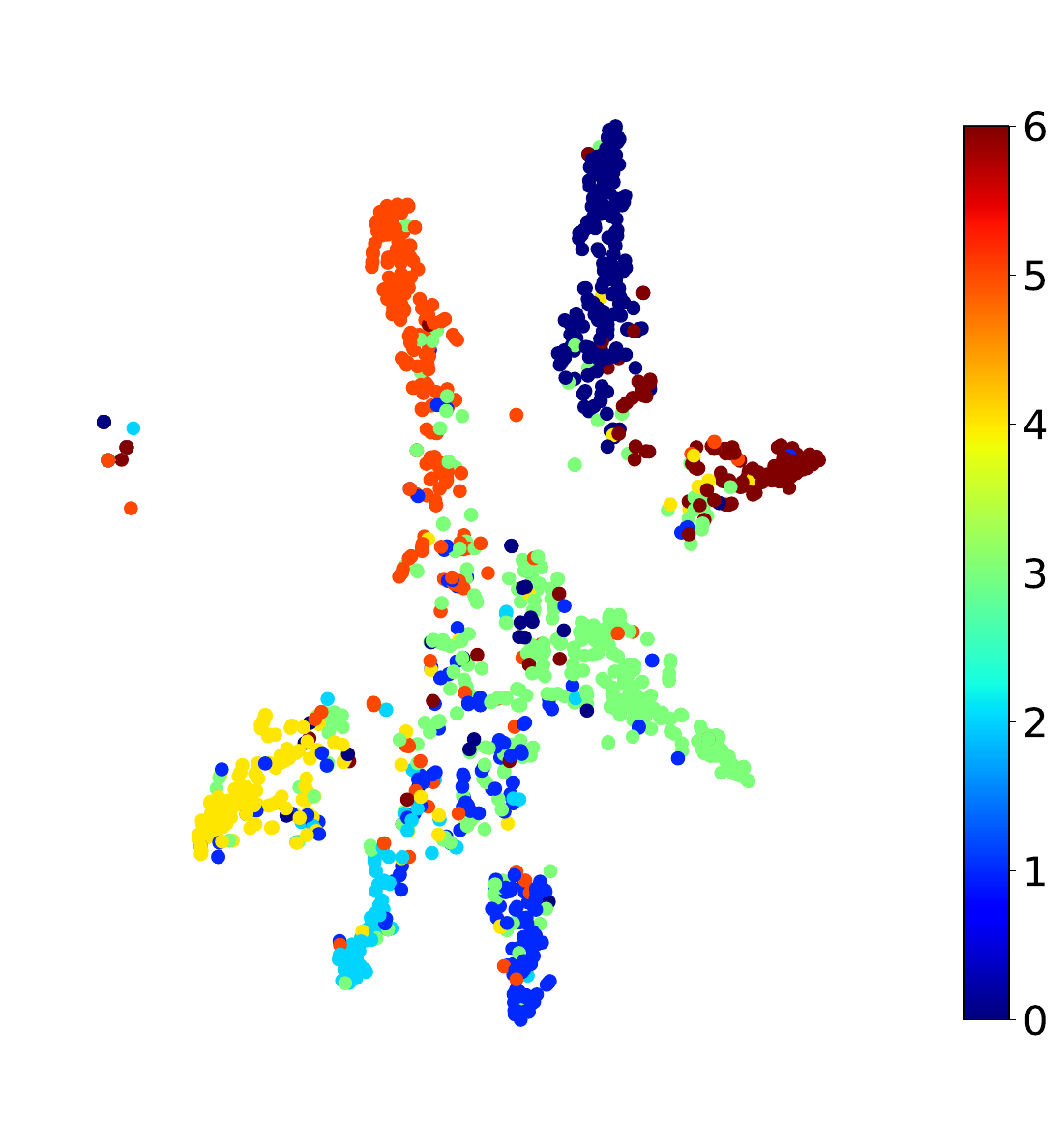}
    \end{minipage}%
    \if 0
    \begin{minipage}{0.20\textwidth}
        \caption{Visualization of feature embedding update on CC-Citeseer dataset (6 classes) -- initial embedding (left), HGNN embedding update (middle), DPHGNN embedding update (right).}
        \end{minipage}
        \fi
    \vspace{-5mm}
        \caption{Visualization of feature embedding update on the CC-Citeseer dataset (6 classes) -- initial embedding (left), HGNN embedding update (middle), DPHGNN embedding update (right) (see Section \ref{sec:viz} for other datasets).}
    \label{fig:CCCiteseer}
    \vspace{-3mm}
\end{figure*}

\subsection{Performance Comparison}
We evaluate the model performance based on mean accuracy, macro F1-score, and micro F1-score. We report the average performance of the model over ten runs, along with the standard deviation.

\subsubsection{Comparative Analysis.} Table \ref{table3} presents the comparative analysis. We observe that spatial HGNN methods outperform the spectral ones for co-authorship datasets (by a margin of $\sim$$5\%$). However, it is the opposite in the case of co-citation datasets in which spectral methods stand out (by a significant margin of $7\%$-$9\%$). 

Our proposed DPHGNN method, with TAA, SIB, and DFF+HGNN modules, is robust to the change in hypergraph topology. We quantify the performance improvement in DPHGNN by comparing it with the best-performing spatial HGNN on co-authorship datasets. DPHGNN outperforms UniGCN, UniSAGE, and UniGAT by $1.1\%$, $1.2\%$, and $2.3\%$, respectively, on CA-DBLP, and $3.4\%$, $2.5\%$, and  $2.6\%$, respectively, on CA-Cora. For the co-citation datasets, with the induction of SIB and TAA (spectral), DPHGNN outperforms spatial UniGNNs (UniGCN, UniSAGE, UniGAT) by $6.3\%$, $4.2\%$, and $9.6\%$, respectively. On CC-Citeseer, DPHGNN outperforms the best baseline by $2.2\%$ and gives the best performance on F1-Score within one standard deviation (std) range. On CC-Cora, HGNN, being a fully spectral baseline, surpasses DPHGNN by a small margin of $1.8\%$; however, DPHGNN outperforms the rest of the spatial baselines, namely UniGCN, UniSAGE and UniGAT, with a significant margin of $6.4\%$, $2.6\%$, $11.6\%$, respectively. For HouseCommittees, UniGNNs surpass DPHGNN by a small margin of $2.8\%$ (within mean std range). On YelpRestaurant, Cooking200, and News20 with dense hypernode distributions, DPHGNN outperforms best spatial baselines (HGNN+, UniSAGE, UniGAT) with $2.9\%$, $3.3\%$, $0.8\%$ and best spectral baselines (HGNN, HGNN, HyperGCN) with $5.5\%$, $2.3\%$, $1.4\%$ respectively. The ablation study below empirically validates the effect of each module in improving the performance of DPHGNN.

{\bf Embedding Update Visualization.}
The performance of a classification model is inherently dependent on interpolating the underlying distribution and clustering the data points for respective classes. Figure \ref{fig:CCCiteseer} and Section \ref{sec:viz} provide t-SNE visualization \cite{c:42} of embedding update from DPHGNN and that from the best baseline (HGNN). We observe the former one segregating classes more effectively than the latter one.  
 
\subsubsection{Ablation Study. \label{sec:ablation}}
We study the impact of each module of DPHGNN and report the change in the model's performance with the structure of hypergraph datasets in Table \ref{table:4}. 
\begin{itemize}[leftmargin=*]
    \item On co-authorship datasets (CA-DBLP, CA-Cora), most impact is due to spatial information flow. Without TAA, the performance of DPHGNN decreases ($1.3\%\downarrow$, $1.5\%\downarrow$); however, DFF propagates the lower-order structural information. Without DFF ($2.5\%\downarrow$, $2.7\%\downarrow$), TAA can only provide cross-attention information of node entities as inductive bias.
    \item On co-citation datasets, CC-Citeseer, and CC-Cora, the TAA (spectral) and SIB modules of DPHGNN have the majority of contributions, without which the performance of DPHGNN deteriorates by $6.3\%\downarrow$, and $5.9\%\downarrow$, respectively. This supports the empirical analysis of spectral HGNNs performing better than spatial baselines over co-citation datasets. Without TAA, the performance of DPHGNN degrades ($2.9\%\downarrow$, $3.3\%\downarrow$) as TAA propagates spectral inductive biases.
    \item On HouseCommittees and Cooking200, features are initialised with one-hot encoding of node indices. Therefore, TAA and SIB have the most impact on empowering features with spatial and spectral inductive biases; the performance deteriorates without TAA ($4.1\%\downarrow$, $5.7\%\downarrow$) and without SIB ($3.2\%\downarrow$, $~6.2\%\downarrow$). Moreover, due to sparse connectivity ($|\mu| \sim 0.5,0.7$) with dense hypernode density ($|M| \sim 34, 19$) TAA (spectral) complements in the absence of SIB, and TAA (spatial) for DFF.
    \item On News20, most of the HGNNs suffer from over-smoothing of features as the highly dense hypernodes ($|M|\sim 654$) diffuse features via highly sparse hyperedges ($|\mu| \sim 0.01$), resulting in similar performance irrespective of HGNN architecture. Without SIB ($4.5\%\downarrow$), DPHGNN struggles to break automorphisms in this symmetric structure.
    \item On YelpRestaurant, with densely connected hypergraph, the performance drops without TAA and DFF ($5.2\%\downarrow$, $3.3\%\downarrow$) as they provide a major contribution in the spatial message flow. Without SIB ($1.3\%\downarrow$), TAA (spectral) provides desired inductive biases.
\end{itemize}

\begin{figure}[!t]
\centering
\includegraphics[width=0.5\textwidth]{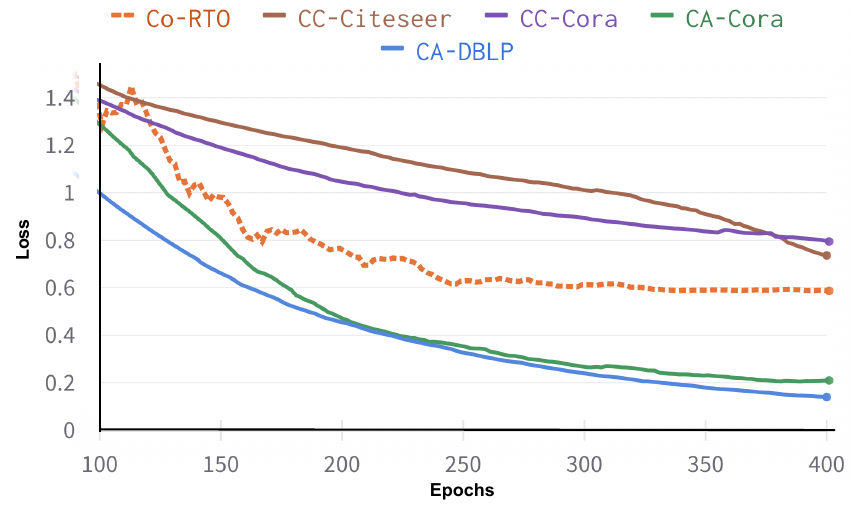}
\caption{Convergence analysis of DPHGNN over benchmark hypergraph and CO-RTO datasets.}
\label{fig: loss}
\vspace{-5mm}
\end{figure}

We also study the model's performance with increased message-passing blocks (i.e., the TAA and DFF blocks). The {DFF+HGNN layers} in Table \ref{table:5} indicate the increase in hypergraph message-passing layers; DFF stays constant throughout the layer increment. Moreover, the spectral inductive biases remain unchanged. However, we observe that with the increase in HGNN layers (4 to 8), the performance drops significantly ($5.1\%\downarrow$, $~30\%\downarrow$); over-smoothing of features \cite{c:40} remains the primary reason behind the performance drop with the increase in layers.

\subsubsection{Convergence Analysis.} Figure \ref{fig: loss} shows the significance of topological structure for the loss convergence in HGNN. We observe that the loss function tends to have a weak convergence for the co-citation networks compared to co-authorship networks.

\begin{table}[t]
\centering
\resizebox{0.8\columnwidth}{!}{
\begin{tabular}{lrrrr}
\hline
\textbf{Datasets} & \multicolumn{1}{l}{\textbf{Overall}} & \multicolumn{1}{l}{\textbf{w/o TAA}} & \multicolumn{1}{l}{\textbf{w/o SIB}} & \multicolumn{1}{l}{\textbf{w/o DFF}} \\ \hline
CA-DBLP           & 89.06                                & 87.72                                & 88.65                                & 86.54                                \\
CA-Cora           & 75.12                                & 73.6                                 & 74.52                                & 72.34                                \\
CC-Citeseer       & 65.77                                & 63.86                                & 59.47                                & 64.05                                \\
CC-Cora           & 67.51                                & 64.12                                & 61.15                                & 65.98                                \\
YelpRestaurant    & 35.82                                & 30.56                                & 34.69                                & 32.45                                \\
HouseCommitees    & 56.87                                & 52.86                                & 53.66                                & 55.68                                \\
Cooking200        & 55.74                                & 50.04                                & 49.53                                & 53.09                                \\
News20            & 81.07                                & 80.23                                & 76.55                                & 80.33                                \\ 
CO-RTO            & 51.39                                & 48.32                                & 46.75                                & 49.12                                \\\hline
\end{tabular}}
\caption{Ablation study to measure the impact of three blocks of DPHGNN -- Topology-Aware Attention (TAA), Spectral Inductive Biases (SIB), and Dynamic Feature Fusion (DFF). For benchmark datasets, we report mean accuracy (\%); since the CO-RTO dataset is imbalanced, we report macro F1-score. }
\label{table:4}
\vspace{-4mm}
\end{table}

\begin{table}[t]
\centering
\resizebox{0.9\columnwidth}{!}{
\begin{tabular}{l|ccc|ccc}
\hline
\textbf{Datasets} & \textbf{} & \textbf{TAA Layers} & \textbf{} & \textbf{} & \textbf{DFF Layers} &        \\ \cline{2-7} 
                  & 2-GCN     & 4-GCN               & 8-GCN     & 2-HGNN    & 4-HGNN              & 8-HGNN \\ \hline
CA-DBLP           & 89.06     & 88.34               & 86.95     & 89.06     & 88.92               & 46.03  \\
CA-Cora           & 75.12     & 74.85               & 73.8      & 75.12     & 69.01               & 32.54  \\
CO-Citeseer       & 65.77     & 64.98               & 63.94     & 65.77     & 59.82               & 25.86  \\
CO-Cora           & 67.51     & 66.02               & 64.22     & 67.51     & 58.25               & 28.05  \\
CoRTO             & 51.39     & 49.95               & 48.38     & 51.39     & 28.67               & 21.03  \\
YelpRestaurant    & 35.82     & 34.23               & 30.23     & 35.82     & 30.37               & 15.35  \\
HouseCommitees    & 56.87     & 55.01               & 51.95     & 56.87     & 50.4                & 20.91  \\
Cooking200        & 55.74     & 54.02               & 50.6      & 55.74     & 49.59               & 18.95  \\
News20            & 81.07     & 79.88               & 76.78     & 81.07     & 74.55               & 39.26  \\ \hline
\end{tabular}}
\caption{Ablation study to measure the impact of increment in message-passing layers over the TAA and DFF blocks.}
\label{table:5}
\vspace{-5mm}
\end{table}

\subsection{Performance on Hypergraph Isomorphism Test}
To validate our theoretical framework behind equivariant operator learning (EOL) within the DPHGNN model, we constructed two synthetic datasets; {\em "Isomorphic (Iso) HG" and "Non-Isomorphic (Non-Iso) HG"}. The classification results on these synthetic tasks are presented in Table \ref{tab:IsoHG}. The results demonstrate that DPHGNN achieves significant improvements, which are attributed to its robust spectral and spatial inductive biases.

\begin{table}[t]
\centering
\resizebox{0.6\columnwidth}{!}{
\begin{tabular}{l|ll}
\hline
Model    & Iso HG & Non-Iso HG \\ \hline
HGNN     & 55.23         & 53.25             \\
HGNN++   & 40.89         & 63.69             \\
HyperGCN & \textcolor{blue}{57.31}         & 58.25             \\
HNHN     & 38.85         & 77.3              \\
UniGCN   & 44.83         & 80.55             \\
UniSAGE  & 46.35         & \textcolor{blue}{85.00}             \\
UniGAT   & 48.84         & 84.21             \\
HENN     & 53.54         & 75.26             \\
HyGNN    & 51.63         & 80.64             \\ \hline
DPHGNN   & \textcolor{red}{73.38}         & \textcolor{red}{87.58}             \\ \hline
\end{tabular}}
\caption{Performance comparison on the Isomorphic and Non-Isomorphic datasets. The best performance is highlighted in \textcolor{red}{red}, and the best baseline is highlighted in \textcolor{blue}{blue}.}
\label{tab:IsoHG}
\vspace{-3mm}
\end{table}
\vspace{-3mm}
\section{The Return-to-Origin (RTO) Task}
This section describes modeling resource-constrained real-world data with hypergraph structure and reports the efficacy of DPHGNN for an industrial application -- the Return-to-Origin (RTO) task.

\paragraph{\bf Task Description.}
RTO refers to the situation where a product is returned to the point of origin instead of being delivered to the intended recipient \cite{c:30}. A product being RTO-ed substantially impacts global e-commerce and encourages a vicious cycle of financial losses for merchants, complex fraudulent transactions, etc. We model RTO as a semi-supervised hypernode classification task. In collaboration with an e-commerce giant, we deploy our proposed DPHGNN to address the RTO task.

\subsection{The CO-RTO Dataset}
Here, we describe the carefully curated CO-RTO (Co-order RTO) hypergraph structure. An RTO  involves various physical entities from an order transaction cycle (customer, product, supplier, and order) that can not be expressed using pairwise relations. This naturally motivates us to model higher-order interactions using hyperedges, containing entities as hypernodes. The RTO prediction task boils down to a binary hypernode classification problem -- predicting whether order-related nodes are RTOs. We obtained the CO-RTO dataset from our partner e-commerce company (Table \ref{table2}).

{\bf Hypergraph Construction.}
The CO-RTO dataset is an orderly transaction and entity relation hypergraph dataset. The raw data accompanies the feature information collected from partner e-commerce companies. Mainly comprised of four major entities that play key roles in a complete order transaction cycle namely, user, supplier, courier, and product. In this work, we have used an autoencoder for the conversion of heterogeneous features to homogeneous features to enable message flow. 
We formulate a hypergraph structure in the form of nodes that represent these entities with their corresponding feature vectors and family of hyperedges where each hyperedge comprises entity nodes from an order transaction flow.

\subsection{Experimental Results}
Table \ref{table:6} reports the performance of the competing models for the RTO task. Due to sparse connectivity, symmetrical structure, and class imbalance in CO-RTO, baselines suffer from over-smoothing of features over the message propagation. DPHGNN achieves $51\%$ macro-F1, outperforming other baselines by a significant margin (close to 7\%). We further report critical observations below. 

{\bf Symmetric and Sparse Incidence Structure.} In the CO-RTO hypergraph, the edge connectivity is sufficiently sparse (avg. hypernode density $|M|$ is $1.6$, hyperedge density $|\mu|$ is $0.82$; c.f. Table \ref{table2}),  constraining message passing between subsequent layers in HGNN. The fixed cardinality in the hyperedges of the CO-RTO hypergraph and the highly symmetrical structure tend to increase the multiplicity of eigenvalues, which is inherently linked with the Cheeger constant \cite{c:19} and the over-squashing phenomenon \cite{c:20}.  

{\bf Automorphism.} The hypergraph $HG = (V, \xi)$ of the CO-RTO dataset is a uniform hypergraph containing an automorphism group. More precisely, if $\pi$ is a permutation function over hypernode set $V$ of $HG$; $\pi: V \rightarrow V'$, with $|V| = |V'|$ and for each hyperedge $e \in \xi$ and $|e|=k$ (here $k=4$). Here $\pi$ satisfies $\pi(v)=v'$, where $v\in V$ and $v'\in V'$. Also $\forall e\in \xi \ \exists \ e' \in \xi$ such that $e' = \{\pi(v)|v\in e\}$. Now as every hyperedge $e_{i} \in \xi$, the image of $e_{i}$ under $\pi$ is also a hyperedge in $\xi$. For example: if $e_{i}=\{v_{i,1}, v_{i,2}, v_{i,3}, v_{i,4}\}$ then $\pi(e_{i})=\{\pi(v_{i,1}), \pi(v_{i,2}), \pi(v_{i,3}), \pi(v_{i,4})\}$ is also in $\xi$. Given the general spatial HGNN message-passing framework, automorphism in hypergraph preserves the incidence relations between vertices $u$ and edge $e$ as $HG_{u, e} = HG_{\pi(u), \pi(e)}$. Given two hypernodes, ($a, b$) under automorphism $\pi$ are $a = \phi(b)$, their feature update equations \ref{eq:auto1} and \ref{eq:auto2} are:
\begin{equation}\small\label{eq:auto1}
    \hat{x}_a^{(l)} = \text{Update}^{(l)}\left(x_a^{(l)}, \text{Aggr}^{(l)}\left(\left\{\left\{\text{Aggr}^{(l-1)}\left(x_u^{(l-1)}\right) \forall u \in e\right\} \forall e \in E_a\right\}\right)\right)
\end{equation}

\begin{multline}\small\label{eq:auto2}
    \hat{x}_b^{(l)} = \operatorname{Update}^{(l)}\bigg( x_b^{(l)}, \operatorname{Aggr}^{(l)}\Big( \{ \{ \operatorname{Aggr}^{(l-1)}(x_{\pi(u)}^{(l-1)}) \\
    \forall \pi(u) \in \pi(e)\} \forall \pi(e) \in E_b \} \Big) \bigg)
\end{multline}

Hence given the automorphism $\pi$, feature update leads to $\hat{x}_a^{(l)} = \hat{x}_b^{(l)}$. Therefore, the spatial HGNNs bounded by 1-GWL expressivity perform sub-optimally due to symmetry-induced automorphism groups. This constitutes a major part of our novel contributions. To our knowledge, our method (DPHGNN) is the first HGNN that breaks the automorphism groups and is 3-GWL expressive.

{\bf Over-smoothing.} We perform experiments with dual-layer HGNNs. Still, the features get smoothed over message passing due to the sparse incidence structure. This is the main reason behind the suboptimal performance of baselines on CO-RTO. Along with propagating messages with skip connections \cite{c:21}, DPHGNN leverages static inductive priors to prevent over-smoothing of relevant feature information. 

{\bf Class Imbalance.} In CO-RTO, class imbalance occurs naturally (a ratio of $6568$:$26827$ for RTO:Non-RTO), and HGNN tends to overfit the majority class. 
These observations lead to a drastic performance drop with the hypergraph-based MPNN models.

\begin{table}[t]
\centering
\resizebox{0.7\columnwidth}{!}{
\begin{tabular}{l|lll}
\hline
{\bf Method} & {\bf Mean Acc}           & {\bf Macro F1}           & {\bf Micro F1}           \\ \hline
HGNN     & 80.51±1.7          & 45.01±0.6          & 80.51±1.6          \\
HGNN+    & 80.47±0.9          & 44.74±1.5          & 80.47±0.9          \\
HyperGCN & 80.50±0.6          & 44.59±0.2          & 80.49±0.6          \\
HNHN     & 80.49±0.8          & 44.60±0.2          & 80.49±0.8          \\
UniGCN   & 80.50±2.6          & 44.86±3.2          & 80.49±2.8          \\
UniSAGE  & \textcolor{blue}{81.66±2.9} &\textcolor{blue}{46.66±3.1} & \textcolor{blue}{81.66±2.9} \\
UniGAT   & 80.48±0.6          & 44.70±0.4          & 80.48±0.6          \\ \hdashline
DPHGNN  & \textcolor{red}{86.02±1.2} & \textcolor{red}{51.39±0.8} & \textcolor{red}{86.02±1.2} \\
\hline
\end{tabular}}
\caption{Performance comparison on the CO-RTO dataset. The best performance is highlighted in \textcolor{red}{red}, and the best baseline is highlighted in \textcolor{blue}{blue}.}
\label{table:6}
\vspace{-5mm}
\end{table}

\vspace{-2.6mm}
\subsection{Deployment Workflow \label{sec:deployment}}
This section presents a brief description of the deployment of DPHGNN in a large-scale e-commerce platform with a transductive setting \cite{c:37}, Figure \ref{fig: deployment} in supplementary illustrates the deployment pipelines.

{\bf Data Pipelines.}
The large-scale e-commerce transaction data is pulled through the Apache Spark streaming service \cite{c:43} and distributed across multiple clusters to compute the homogeneous hypernode features and specify user, product, supplier, and courier entities inside a co-order hyperedge. These node features and edges are hosted in a feature store infrastructure built on top of a data lake and partitioned for efficient read and write operations. 

{\bf Test-Train Hypergraph creation.}
Based on this model’s inference, some of the orders get actioned. Therefore, the possible outcome of RTO remains unknown due to the model’s action, and hence the data is not useful for future model training. To resolve this, a holdout sample is maintained where no actions are taken and are used as training data for transductive training. 

{\bf Model Building and Inference.}
In the current transductive setting, the target variable is only available for the past holdout transactions; the model is trained only on those hypernodes and hyperedges using training masks. All the edges and nodes go through the forward pass. Once the model is trained, the predictions for the latest transactions are collected and stored. We used NVIDIA T4 GPU clusters with 20 workers for network training and inference. The orders flagged are published in a Kafka queue \cite{c:44} to be consumed and blocked by the order management system.

\section{Conclusion}
In this paper, we proposed DPHGNN, a novel hypergraph message-passing approach for representation learning. It explicitly learns static and dynamic feature representations from multiple views of lower-order graph topology. DPHGNN outperformed seven SOTA baselines over eight benchmark datasets. We also showed its expressive power theoretically. DPHGNN was also tested on a new e-commerce dataset (CO-RTO) and deployed by our industry partner for RTO prediction.

\section*{Acknowledgement}
The work was funded by Meesho.


\bibliographystyle{ACM-Reference-Format}
\bibliography{KDD24}


\begin{thebibliography}{43}


\ifx \showCODEN    \undefined \def \showCODEN     #1{\unskip}     \fi
\ifx \showDOI      \undefined \def \showDOI       #1{#1}\fi
\ifx \showISBNx    \undefined \def \showISBNx     #1{\unskip}     \fi
\ifx \showISBNxiii \undefined \def \showISBNxiii  #1{\unskip}     \fi
\ifx \showISSN     \undefined \def \showISSN      #1{\unskip}     \fi
\ifx \showLCCN     \undefined \def \showLCCN      #1{\unskip}     \fi
\ifx \shownote     \undefined \def \shownote      #1{#1}          \fi
\ifx \showarticletitle \undefined \def \showarticletitle #1{#1}   \fi
\ifx \showURL      \undefined \def \showURL       {\relax}        \fi
\providecommand\bibfield[2]{#2}
\providecommand\bibinfo[2]{#2}
\providecommand\natexlab[1]{#1}
\providecommand\showeprint[2][]{arXiv:#2}

\bibitem[Armbrust et~al\mbox{.}(2018)]%
        {c:43}
\bibfield{author}{\bibinfo{person}{Michael Armbrust}, \bibinfo{person}{Tathagata Das}, \bibinfo{person}{Joseph Torres}, \bibinfo{person}{Burak Yavuz}, \bibinfo{person}{Shixiong Zhu}, \bibinfo{person}{Reynold Xin}, \bibinfo{person}{Ali Ghodsi}, \bibinfo{person}{Ion Stoica}, {and} \bibinfo{person}{Matei Zaharia}.} \bibinfo{year}{2018}\natexlab{}.
\newblock \showarticletitle{Structured streaming: A declarative api for real-time applications in apache spark}. In \bibinfo{booktitle}{\emph{Proceedings of the 2018 International Conference on Management of Data}}. \bibinfo{pages}{601--613}.
\newblock


\bibitem[Arya et~al\mbox{.}(2020)]%
        {c:27}
\bibfield{author}{\bibinfo{person}{Devanshu Arya}, \bibinfo{person}{Deepak~K. Gupta}, \bibinfo{person}{Stevan Rudinac}, {and} \bibinfo{person}{Marcel Worring}.} \bibinfo{year}{2020}\natexlab{}.
\newblock \showarticletitle{HyperSAGE: Generalizing Inductive Representation Learning on Hypergraphs.}
\newblock \bibinfo{journal}{\emph{CoRR}}  \bibinfo{volume}{abs/2010.04558} (\bibinfo{year}{2020}).
\newblock
\urldef\tempurl%
\url{http://dblp.uni-trier.de/db/journals/corr/corr2010.html#abs-2010-04558}
\showURL{%
\tempurl}


\bibitem[Bai et~al\mbox{.}(2019)]%
        {c:22}
\bibfield{author}{\bibinfo{person}{Song Bai}, \bibinfo{person}{Feihu Zhang}, {and} \bibinfo{person}{Philip H.~S. Torr}.} \bibinfo{year}{2019}\natexlab{}.
\newblock \showarticletitle{Hypergraph Convolution and Hypergraph Attention.}
\newblock \bibinfo{journal}{\emph{CoRR}}  \bibinfo{volume}{abs/1901.08150} (\bibinfo{year}{2019}).
\newblock
\urldef\tempurl%
\url{http://dblp.uni-trier.de/db/journals/corr/corr1901.html#abs-1901-08150}
\showURL{%
\tempurl}


\bibitem[Bandyopadhyay et~al\mbox{.}(2020)]%
        {c:35}
\bibfield{author}{\bibinfo{person}{Sambaran Bandyopadhyay}, \bibinfo{person}{Kishalay Das}, {and} \bibinfo{person}{M.~Narasimha Murty}.} \bibinfo{year}{2020}\natexlab{}.
\newblock \showarticletitle{Hypergraph Attention Isomorphism Network by Learning Line Graph Expansion}. In \bibinfo{booktitle}{\emph{2020 IEEE International Conference on Big Data (Big Data)}}. \bibinfo{pages}{669--678}.
\newblock
\urldef\tempurl%
\url{https://doi.org/10.1109/BigData50022.2020.9378335}
\showDOI{\tempurl}


\bibitem[Bretto(2013)]%
        {c:32}
\bibfield{author}{\bibinfo{person}{Alain Bretto}.} \bibinfo{year}{2013}\natexlab{}.
\newblock \bibinfo{booktitle}{\emph{{Hypergraph Theory}}}.
\newblock \bibinfo{publisher}{Springer}, \bibinfo{address}{Heidelberg}.
\newblock
\showISBNx{978-3-319-00079-4}
\showISSN{2192-4732}
\urldef\tempurl%
\url{https://doi.org/10.1007/978-3-319-00080-0}
\showDOI{\tempurl}


\bibitem[Böker(2019)]%
        {c:16}
\bibfield{author}{\bibinfo{person}{Jan Böker}.} \bibinfo{year}{2019}\natexlab{}.
\newblock \showarticletitle{Color Refinement, Homomorphisms, and Hypergraphs.}. In \bibinfo{booktitle}{\emph{WG}} \emph{(\bibinfo{series}{Lecture Notes in Computer Science}, Vol.~\bibinfo{volume}{11789})}, \bibfield{editor}{\bibinfo{person}{Ignasi Sau} {and} \bibinfo{person}{Dimitrios~M. Thilikos}} (Eds.). \bibinfo{publisher}{Springer}, \bibinfo{pages}{338--350}.
\newblock
\showISBNx{978-3-030-30786-8}
\urldef\tempurl%
\url{http://dblp.uni-trier.de/db/conf/wg/wg2019.html#Boker19}
\showURL{%
\tempurl}


\bibitem[Chen et~al\mbox{.}(2022)]%
        {c:21}
\bibfield{author}{\bibinfo{person}{Guanzi Chen}, \bibinfo{person}{Jiying Zhang}, \bibinfo{person}{Xi Xiao}, {and} \bibinfo{person}{Yang Li}.} \bibinfo{year}{2022}\natexlab{}.
\newblock \bibinfo{title}{Preventing Over-Smoothing for Hypergraph Neural Networks}.
\newblock
\newblock
\showeprint[arxiv]{2203.17159}~[cs.LG]


\bibitem[Chien et~al\mbox{.}(2022)]%
        {c:41}
\bibfield{author}{\bibinfo{person}{Eli Chien}, \bibinfo{person}{Chao Pan}, \bibinfo{person}{Jianhao Peng}, {and} \bibinfo{person}{Olgica Milenkovic}.} \bibinfo{year}{2022}\natexlab{}.
\newblock \showarticletitle{You are AllSet: A Multiset Function Framework for Hypergraph Neural Networks}. In \bibinfo{booktitle}{\emph{International Conference on Learning Representations}}.
\newblock
\urldef\tempurl%
\url{https://openreview.net/forum?id=hpBTIv2uy_E}
\showURL{%
\tempurl}


\bibitem[Ciano et~al\mbox{.}(2021)]%
        {c:37}
\bibfield{author}{\bibinfo{person}{Giorgio Ciano}, \bibinfo{person}{Alberto Rossi}, \bibinfo{person}{Monica Bianchini}, {and} \bibinfo{person}{Franco Scarselli}.} \bibinfo{year}{2021}\natexlab{}.
\newblock \showarticletitle{On inductive--transductive learning with graph neural networks}.
\newblock \bibinfo{journal}{\emph{IEEE Transactions on Pattern Analysis and Machine Intelligence}} \bibinfo{volume}{44}, \bibinfo{number}{2} (\bibinfo{year}{2021}), \bibinfo{pages}{758--769}.
\newblock


\bibitem[Dong et~al\mbox{.}(2020)]%
        {c:12}
\bibfield{author}{\bibinfo{person}{Yihe Dong}, \bibinfo{person}{Will Sawin}, {and} \bibinfo{person}{Yoshua Bengio}.} \bibinfo{year}{2020}\natexlab{}.
\newblock \showarticletitle{HNHN: Hypergraph Networks with Hyperedge Neurons}.
\newblock \bibinfo{journal}{\emph{ArXiv}}  \bibinfo{volume}{abs/2006.12278} (\bibinfo{year}{2020}).
\newblock
\urldef\tempurl%
\url{https://api.semanticscholar.org/CorpusID:219965680}
\showURL{%
\tempurl}


\bibitem[Fan et~al\mbox{.}(2019)]%
        {c:05}
\bibfield{author}{\bibinfo{person}{Wenqi Fan}, \bibinfo{person}{Yao Ma}, \bibinfo{person}{Qing Li}, \bibinfo{person}{Yuan He}, \bibinfo{person}{Eric Zhao}, \bibinfo{person}{Jiliang Tang}, {and} \bibinfo{person}{Dawei Yin}.} \bibinfo{year}{2019}\natexlab{}.
\newblock \showarticletitle{Graph neural networks for social recommendation}. In \bibinfo{booktitle}{\emph{The world wide web conference}}. \bibinfo{pages}{417--426}.
\newblock


\bibitem[Feng et~al\mbox{.}(2019)]%
        {c:06}
\bibfield{author}{\bibinfo{person}{Yifan Feng}, \bibinfo{person}{Haoxuan You}, \bibinfo{person}{Zizhao Zhang}, \bibinfo{person}{Rongrong Ji}, {and} \bibinfo{person}{Yue Gao}.} \bibinfo{year}{2019}\natexlab{}.
\newblock \showarticletitle{Hypergraph Neural Networks.}. In \bibinfo{booktitle}{\emph{AAAI}}. \bibinfo{publisher}{AAAI Press}, \bibinfo{pages}{3558--3565}.
\newblock
\showISBNx{978-1-57735-809-1}
\urldef\tempurl%
\url{http://dblp.uni-trier.de/db/conf/aaai/aaai2019.html#FengYZJG19}
\showURL{%
\tempurl}


\bibitem[Gao et~al\mbox{.}(2022a)]%
        {c:11}
\bibfield{author}{\bibinfo{person}{Yue Gao}, \bibinfo{person}{Yifan Feng}, \bibinfo{person}{Shuyi Ji}, {and} \bibinfo{person}{Rongrong Ji}.} \bibinfo{year}{2022}\natexlab{a}.
\newblock \showarticletitle{HGNN+: General Hypergraph Neural Networks}.
\newblock \bibinfo{journal}{\emph{IEEE Transactions on Pattern Analysis and Machine Intelligence}}  \bibinfo{volume}{45} (\bibinfo{year}{2022}), \bibinfo{pages}{3181--3199}.
\newblock
\urldef\tempurl%
\url{https://api.semanticscholar.org/CorpusID:249644940}
\showURL{%
\tempurl}


\bibitem[Gao et~al\mbox{.}(2022b)]%
        {c:23}
\bibfield{author}{\bibinfo{person}{Yue Gao}, \bibinfo{person}{Zizhao Zhang}, \bibinfo{person}{Haojie Lin}, \bibinfo{person}{Xibin Zhao}, \bibinfo{person}{Shaoyi Du}, {and} \bibinfo{person}{Changqing Zou}.} \bibinfo{year}{2022}\natexlab{b}.
\newblock \showarticletitle{Hypergraph Learning: Methods and Practices}.
\newblock \bibinfo{journal}{\emph{IEEE Transactions on Pattern Analysis and Machine Intelligence}} \bibinfo{volume}{44}, \bibinfo{number}{5} (\bibinfo{year}{2022}), \bibinfo{pages}{2548--2566}.
\newblock
\urldef\tempurl%
\url{https://doi.org/10.1109/TPAMI.2020.3039374}
\showDOI{\tempurl}


\bibitem[Gu et~al\mbox{.}(2020)]%
        {c:25}
\bibfield{author}{\bibinfo{person}{Xuemei Gu}, \bibinfo{person}{Lijun Chen}, {and} \bibinfo{person}{Mario Krenn}.} \bibinfo{year}{2020}\natexlab{}.
\newblock \showarticletitle{Quantum experiments and hypergraphs: Multiphoton sources for quantum interference, quantum computation, and quantum entanglement}.
\newblock \bibinfo{journal}{\emph{Physical Review A}} \bibinfo{volume}{101}, \bibinfo{number}{3} (\bibinfo{date}{mar} \bibinfo{year}{2020}).
\newblock
\urldef\tempurl%
\url{https://doi.org/10.1103/physreva.101.033816}
\showDOI{\tempurl}


\bibitem[Hamilton et~al\mbox{.}(2017)]%
        {c:09}
\bibfield{author}{\bibinfo{person}{Will Hamilton}, \bibinfo{person}{Zhitao Ying}, {and} \bibinfo{person}{Jure Leskovec}.} \bibinfo{year}{2017}\natexlab{}.
\newblock \showarticletitle{Inductive Representation Learning on Large Graphs}. In \bibinfo{booktitle}{\emph{Advances in {Neural} {Information} {Processing} {Systems}}}. \bibinfo{pages}{11}.
\newblock


\bibitem[Huang and Yang(2021)]%
        {c:13}
\bibfield{author}{\bibinfo{person}{Jing Huang} {and} \bibinfo{person}{Jie Yang}.} \bibinfo{year}{2021}\natexlab{}.
\newblock \showarticletitle{UniGNN: a Unified Framework for Graph and Hypergraph Neural Networks}. In \bibinfo{booktitle}{\emph{Proceedings of the Thirtieth International Joint Conference on Artificial Intelligence, {IJCAI} 2021, Virtual Event / Montreal, Canada, 19-27 August 2021}}, \bibfield{editor}{\bibinfo{person}{Zhi{-}Hua Zhou}} (Ed.). \bibinfo{publisher}{ijcai.org}, \bibinfo{pages}{2563--2569}.
\newblock
\urldef\tempurl%
\url{https://doi.org/10.24963/ijcai.2021/353}
\showDOI{\tempurl}


\bibitem[Huang and Villar(2021)]%
        {c:39}
\bibfield{author}{\bibinfo{person}{Ningyuan~Teresa Huang} {and} \bibinfo{person}{Soledad Villar}.} \bibinfo{year}{2021}\natexlab{}.
\newblock \showarticletitle{A short tutorial on the weisfeiler-lehman test and its variants}. In \bibinfo{booktitle}{\emph{ICASSP 2021-2021 IEEE International Conference on Acoustics, Speech and Signal Processing (ICASSP)}}. IEEE, \bibinfo{pages}{8533--8537}.
\newblock


\bibitem[Ji et~al\mbox{.}(2020)]%
        {c:24}
\bibfield{author}{\bibinfo{person}{Shuyi Ji}, \bibinfo{person}{Yifan Feng}, \bibinfo{person}{Rongrong Ji}, \bibinfo{person}{Xibin Zhao}, \bibinfo{person}{Wanwan Tang}, {and} \bibinfo{person}{Yue Gao}.} \bibinfo{year}{2020}\natexlab{}.
\newblock \showarticletitle{Dual channel hypergraph collaborative filtering}. In \bibinfo{booktitle}{\emph{Proceedings of the 26th ACM SIGKDD international conference on knowledge discovery \& data mining}}. \bibinfo{pages}{2020--2029}.
\newblock


\bibitem[Kipf and Welling(2016)]%
        {c:08}
\bibfield{author}{\bibinfo{person}{Thomas~N. Kipf} {and} \bibinfo{person}{Max Welling}.} \bibinfo{year}{2016}\natexlab{}.
\newblock \bibinfo{title}{Semi-Supervised Classification with Graph Convolutional Networks}.
\newblock
\newblock
\urldef\tempurl%
\url{http://arxiv.org/abs/1609.02907}
\showURL{%
\tempurl}
\newblock
\shownote{cite arxiv:1609.02907Comment: Published as a conference paper at ICLR 2017}.


\bibitem[Kreps et~al\mbox{.}(2011)]%
        {c:44}
\bibfield{author}{\bibinfo{person}{Jay Kreps}, \bibinfo{person}{Neha Narkhede}, \bibinfo{person}{Jun Rao}, {et~al\mbox{.}}} \bibinfo{year}{2011}\natexlab{}.
\newblock \showarticletitle{Kafka: A distributed messaging system for log processing}. In \bibinfo{booktitle}{\emph{Proceedings of the NetDB}}, Vol.~\bibinfo{volume}{11}. Athens, Greece, \bibinfo{pages}{1--7}.
\newblock


\bibitem[Lachaud et~al\mbox{.}(2022)]%
        {c:04}
\bibfield{author}{\bibinfo{person}{Guillaume Lachaud}, \bibinfo{person}{Patricia Conde-Cespedes}, {and} \bibinfo{person}{Maria Trocan}.} \bibinfo{year}{2022}\natexlab{}.
\newblock \showarticletitle{Graph neural networks-based multilabel classification of citation network}. In \bibinfo{booktitle}{\emph{Asian Conference on Intelligent Information and Database Systems}}. Springer, \bibinfo{pages}{128--140}.
\newblock


\bibitem[Lahon(2022)]%
        {c:30}
\bibfield{author}{\bibinfo{person}{Rashmee Lahon}.} \bibinfo{year}{2022}\natexlab{}.
\newblock \bibinfo{title}{Return to Origin- Why it Happens, Its Impact, and How to Solve it?}
\newblock
\newblock
\urldef\tempurl%
\url{https://razorpay.com/blog/return-to-origin-challenges-and-how-to-solve-it}
\showURL{%
\tempurl}


\bibitem[Li et~al\mbox{.}(2018)]%
        {c:40}
\bibfield{author}{\bibinfo{person}{Qimai Li}, \bibinfo{person}{Zhichao Han}, {and} \bibinfo{person}{Xiao-Ming Wu}.} \bibinfo{year}{2018}\natexlab{}.
\newblock \showarticletitle{Deeper insights into graph convolutional networks for semi-supervised learning}. In \bibinfo{booktitle}{\emph{Proceedings of the AAAI conference on artificial intelligence}}, Vol.~\bibinfo{volume}{32}.
\newblock


\bibitem[Liang et~al\mbox{.}(2022)]%
        {c:02}
\bibfield{author}{\bibinfo{person}{Fan Liang}, \bibinfo{person}{Cheng Qian}, \bibinfo{person}{Wei Yu}, \bibinfo{person}{David~W. Griffith}, {and} \bibinfo{person}{Nada Golmie}.} \bibinfo{year}{2022}\natexlab{}.
\newblock \showarticletitle{Survey of Graph Neural Networks and Applications}.
\newblock \bibinfo{journal}{\emph{Wireless Communications and Mobile Computing}} (\bibinfo{year}{2022}).
\newblock
\urldef\tempurl%
\url{https://api.semanticscholar.org/CorpusID:251192566}
\showURL{%
\tempurl}


\bibitem[Liu et~al\mbox{.}(2020)]%
        {c:29}
\bibfield{author}{\bibinfo{person}{Shengyuan Liu}, \bibinfo{person}{Pei Lv}, \bibinfo{person}{Yuzhen Zhang}, \bibinfo{person}{Jie Fu}, \bibinfo{person}{Junjin Cheng}, \bibinfo{person}{Wanqing Li}, \bibinfo{person}{Bing Zhou}, {and} \bibinfo{person}{Mingliang Xu}.} \bibinfo{year}{2020}\natexlab{}.
\newblock \showarticletitle{Semi-Dynamic Hypergraph Neural Network for 3D Pose Estimation.}. In \bibinfo{booktitle}{\emph{IJCAI}}, \bibfield{editor}{\bibinfo{person}{Christian Bessiere}} (Ed.). \bibinfo{publisher}{ijcai.org}, \bibinfo{pages}{782--788}.
\newblock
\urldef\tempurl%
\url{http://dblp.uni-trier.de/db/conf/ijcai/ijcai2020.html#LiuLZFCLZX20}
\showURL{%
\tempurl}
\newblock
\shownote{Scheduled for July 2020, Yokohama, Japan, postponed due to the Corona pandemic.}.


\bibitem[Maron et~al\mbox{.}(2019)]%
        {c:17}
\bibfield{author}{\bibinfo{person}{Haggai Maron}, \bibinfo{person}{Heli Ben-Hamu}, \bibinfo{person}{Hadar Serviansky}, {and} \bibinfo{person}{Yaron Lipman}.} \bibinfo{year}{2019}\natexlab{}.
\newblock \showarticletitle{Provably powerful graph networks}.
\newblock \bibinfo{journal}{\emph{Advances in neural information processing systems}}  \bibinfo{volume}{32} (\bibinfo{year}{2019}).
\newblock


\bibitem[Morris et~al\mbox{.}(2020)]%
        {c:33}
\bibfield{author}{\bibinfo{person}{Christopher Morris}, \bibinfo{person}{Gaurav Rattan}, {and} \bibinfo{person}{Petra Mutzel}.} \bibinfo{year}{2020}\natexlab{}.
\newblock \showarticletitle{Weisfeiler and Leman go sparse: Towards scalable higher-order graph embeddings}. In \bibinfo{booktitle}{\emph{Advances in Neural Information Processing Systems}}, \bibfield{editor}{\bibinfo{person}{H.~Larochelle}, \bibinfo{person}{M.~Ranzato}, \bibinfo{person}{R.~Hadsell}, \bibinfo{person}{M.F. Balcan}, {and} \bibinfo{person}{H.~Lin}} (Eds.), Vol.~\bibinfo{volume}{33}. \bibinfo{publisher}{Curran Associates, Inc.}, \bibinfo{pages}{21824--21840}.
\newblock
\urldef\tempurl%
\url{https://proceedings.neurips.cc/paper/2020/file/f81dee42585b3814de199b2e88757f5c-Paper.pdf}
\showURL{%
\tempurl}


\bibitem[Mulas et~al\mbox{.}(2021)]%
        {c:26}
\bibfield{author}{\bibinfo{person}{Raffaella Mulas}, \bibinfo{person}{Christian Kuehn}, \bibinfo{person}{Tobias Böhle}, {and} \bibinfo{person}{Jürgen Jost}.} \bibinfo{year}{2021}\natexlab{}.
\newblock \bibinfo{title}{Random walks and Laplacians on hypergraphs: When do they match?}
\newblock
\newblock
\showeprint[arxiv]{2106.11663}~[math.SP]


\bibitem[Mulas et~al\mbox{.}(2020)]%
        {c:38}
\bibfield{author}{\bibinfo{person}{Raffaella Mulas}, \bibinfo{person}{Rub{\'e}n~J S{\'a}nchez-Garc{\i}a}, {and} \bibinfo{person}{Ben~D MacArthur}.} \bibinfo{year}{2020}\natexlab{}.
\newblock \showarticletitle{Hypergraph automorphisms}.
\newblock \bibinfo{journal}{\emph{arXiv preprint arXiv:2010.01049}} (\bibinfo{year}{2020}).
\newblock


\bibitem[Mulas and Zhang(2021)]%
        {c:19}
\bibfield{author}{\bibinfo{person}{Raffaella Mulas} {and} \bibinfo{person}{Dong Zhang}.} \bibinfo{year}{2021}\natexlab{}.
\newblock \showarticletitle{Spectral theory of Laplace operators on oriented hypergraphs}.
\newblock \bibinfo{journal}{\emph{Discrete Mathematics}} \bibinfo{volume}{344}, \bibinfo{number}{6} (\bibinfo{date}{jun} \bibinfo{year}{2021}), \bibinfo{pages}{112372}.
\newblock
\urldef\tempurl%
\url{https://doi.org/10.1016/j.disc.2021.112372}
\showDOI{\tempurl}


\bibitem[Topping et~al\mbox{.}(2022)]%
        {c:20}
\bibfield{author}{\bibinfo{person}{Jake Topping}, \bibinfo{person}{Francesco~Di Giovanni}, \bibinfo{person}{Benjamin~Paul Chamberlain}, \bibinfo{person}{Xiaowen Dong}, {and} \bibinfo{person}{Michael~M. Bronstein}.} \bibinfo{year}{2022}\natexlab{}.
\newblock \bibinfo{title}{Understanding over-squashing and bottlenecks on graphs via curvature}.
\newblock
\newblock
\showeprint[arxiv]{2111.14522}~[stat.ML]


\bibitem[Van~der Maaten and Hinton(2008)]%
        {c:42}
\bibfield{author}{\bibinfo{person}{Laurens Van~der Maaten} {and} \bibinfo{person}{Geoffrey Hinton}.} \bibinfo{year}{2008}\natexlab{}.
\newblock \showarticletitle{Visualizing data using t-SNE.}
\newblock \bibinfo{journal}{\emph{Journal of machine learning research}} \bibinfo{volume}{9}, \bibinfo{number}{11} (\bibinfo{year}{2008}).
\newblock


\bibitem[Velickovic et~al\mbox{.}(2018)]%
        {c:10}
\bibfield{author}{\bibinfo{person}{Petar Velickovic}, \bibinfo{person}{Guillem Cucurull}, \bibinfo{person}{A. Casanova}, \bibinfo{person}{A. Romero}, \bibinfo{person}{P. Li{\`o}}, {and} \bibinfo{person}{Yoshua Bengio}.} \bibinfo{year}{2018}\natexlab{}.
\newblock \showarticletitle{Graph Attention Networks}.
\newblock \bibinfo{journal}{\emph{ArXiv}}  \bibinfo{volume}{abs/1710.10903} (\bibinfo{year}{2018}).
\newblock


\bibitem[Verma et~al\mbox{.}(2019)]%
        {verma2019heterogeneous}
\bibfield{author}{\bibinfo{person}{Janu Verma}, \bibinfo{person}{Srishti Gupta}, \bibinfo{person}{Debdoot Mukherjee}, {and} \bibinfo{person}{Tanmoy Chakraborty}.} \bibinfo{year}{2019}\natexlab{}.
\newblock \showarticletitle{Heterogeneous edge embedding for friend recommendation}. In \bibinfo{booktitle}{\emph{Advances in Information Retrieval: 41st European Conference on IR Research, ECIR 2019, Cologne, Germany, April 14--18, 2019, Proceedings, Part II 41}}. Springer, \bibinfo{pages}{172--179}.
\newblock


\bibitem[Xu et~al\mbox{.}(2019)]%
        {c:07}
\bibfield{author}{\bibinfo{person}{Keyulu Xu}, \bibinfo{person}{Weihua Hu}, \bibinfo{person}{Jure Leskovec}, {and} \bibinfo{person}{Stefanie Jegelka}.} \bibinfo{year}{2019}\natexlab{}.
\newblock \showarticletitle{How Powerful are Graph Neural Networks?}. In \bibinfo{booktitle}{\emph{ICLR}}. \bibinfo{publisher}{OpenReview.net}.
\newblock
\urldef\tempurl%
\url{http://dblp.uni-trier.de/db/conf/iclr/iclr2019.html#XuHLJ19}
\showURL{%
\tempurl}


\bibitem[Xu et~al\mbox{.}(2018)]%
        {c:31}
\bibfield{author}{\bibinfo{person}{Keyulu Xu}, \bibinfo{person}{Chengtao Li}, \bibinfo{person}{Yonglong Tian}, \bibinfo{person}{Tomohiro Sonobe}, \bibinfo{person}{Ken ichi Kawarabayashi}, {and} \bibinfo{person}{Stefanie Jegelka}.} \bibinfo{year}{2018}\natexlab{}.
\newblock \showarticletitle{Representation Learning on Graphs with Jumping Knowledge Networks.}. In \bibinfo{booktitle}{\emph{ICML}} \emph{(\bibinfo{series}{Proceedings of Machine Learning Research}, Vol.~\bibinfo{volume}{80})}, \bibfield{editor}{\bibinfo{person}{Jennifer~G. Dy} {and} \bibinfo{person}{Andreas Krause}} (Eds.). \bibinfo{publisher}{PMLR}, \bibinfo{pages}{5449--5458}.
\newblock
\urldef\tempurl%
\url{http://dblp.uni-trier.de/db/conf/icml/icml2018.html#XuLTSKJ18}
\showURL{%
\tempurl}


\bibitem[Yadati(2020)]%
        {c:15}
\bibfield{author}{\bibinfo{person}{Naganand Yadati}.} \bibinfo{year}{2020}\natexlab{}.
\newblock \showarticletitle{Neural Message Passing for Multi-Relational Ordered and Recursive Hypergraphs}.
\newblock In \bibinfo{booktitle}{\emph{Advances in Neural Information Processing Systems (NeurIPS) 33}}. \bibinfo{publisher}{Curran Associates, Inc.}
\newblock


\bibitem[Yadati et~al\mbox{.}(2019)]%
        {c:14}
\bibfield{author}{\bibinfo{person}{Naganand Yadati}, \bibinfo{person}{Madhav Nimishakavi}, \bibinfo{person}{Prateek Yadav}, \bibinfo{person}{Vikram Nitin}, \bibinfo{person}{Anand Louis}, {and} \bibinfo{person}{Partha Talukdar}.} \bibinfo{year}{2019}\natexlab{}.
\newblock \showarticletitle{Hypergcn: A new method for training graph convolutional networks on hypergraphs}.
\newblock \bibinfo{journal}{\emph{Advances in neural information processing systems}}  \bibinfo{volume}{32} (\bibinfo{year}{2019}).
\newblock


\bibitem[Yang et~al\mbox{.}(2021)]%
        {c:36}
\bibfield{author}{\bibinfo{person}{Kai Yang}, \bibinfo{person}{Hao Sun}, \bibinfo{person}{Chunbo Zou}, {and} \bibinfo{person}{Xiaoqiang Lu}.} \bibinfo{year}{2021}\natexlab{}.
\newblock \showarticletitle{Cross-attention spectral--spatial network for hyperspectral image classification}.
\newblock \bibinfo{journal}{\emph{IEEE Transactions on Geoscience and Remote Sensing}}  \bibinfo{volume}{60} (\bibinfo{year}{2021}), \bibinfo{pages}{1--14}.
\newblock


\bibitem[Yedidia(2011)]%
        {c:01}
\bibfield{author}{\bibinfo{person}{Jonathan~S. Yedidia}.} \bibinfo{year}{2011}\natexlab{}.
\newblock \showarticletitle{Message-Passing Algorithms for Inference and Optimization}.
\newblock \bibinfo{journal}{\emph{Journal of Statistical Physics}}  \bibinfo{volume}{145} (\bibinfo{year}{2011}), \bibinfo{pages}{860--890}.
\newblock
\urldef\tempurl%
\url{https://api.semanticscholar.org/CorpusID:120991239}
\showURL{%
\tempurl}


\bibitem[Zhou et~al\mbox{.}(2018)]%
        {c:03}
\bibfield{author}{\bibinfo{person}{Jie Zhou}, \bibinfo{person}{Ganqu Cui}, \bibinfo{person}{Zhengyan Zhang}, \bibinfo{person}{Cheng Yang}, \bibinfo{person}{Zhiyuan Liu}, {and} \bibinfo{person}{Maosong Sun}.} \bibinfo{year}{2018}\natexlab{}.
\newblock \showarticletitle{Graph Neural Networks: A Review of Methods and Applications}.
\newblock \bibinfo{journal}{\emph{ArXiv}}  \bibinfo{volume}{abs/1812.08434} (\bibinfo{year}{2018}).
\newblock
\urldef\tempurl%
\url{https://api.semanticscholar.org/CorpusID:56517517}
\showURL{%
\tempurl}


\bibitem[Zien et~al\mbox{.}(1999)]%
        {c:34}
\bibfield{author}{\bibinfo{person}{J.Y. Zien}, \bibinfo{person}{M.D.F. Schlag}, {and} \bibinfo{person}{P.K. Chan}.} \bibinfo{year}{1999}\natexlab{}.
\newblock \showarticletitle{Multilevel spectral hypergraph partitioning with arbitrary vertex sizes}.
\newblock \bibinfo{journal}{\emph{IEEE Transactions on Computer-Aided Design of Integrated Circuits and Systems}} \bibinfo{volume}{18}, \bibinfo{number}{9} (\bibinfo{year}{1999}), \bibinfo{pages}{1389--1399}.
\newblock
\urldef\tempurl%
\url{https://doi.org/10.1109/43.784130}
\showDOI{\tempurl}


\end{thebibliography}

\appendix
\newpage

\section{Supplementary Material}

\subsection{Proof of Proposition \label{proof:p2}}
\begin{proof}
    We generalize the work of \cite{c:16, c:17} on hypergraph color refinement from (see Eq. \eqref{eq:5} in the main text) and using tensors to encode the color, for hypergraphs having ${k}$ vertices inside an edge $H C \in \Sigma^{n^k}$, as the color of a node is induced by the coloring of its edges by a $\operatorname{map} f_H(e): V^k \rightarrow \Sigma$. Therefore, the tensor $HC \in \mathbb{R}^{n^k \times a}$ encodes $ {k}$ vertices inside an edge $i \in[n]^k$ with $H C_i \in \Sigma$ and color $\Sigma \in \mathbb{R}^a$.
    Initialising $H C_0^H(v)=1$ for all $ v \in V(H)$, the color refinement is given by:\\ 
    $H C_{t+1}^H(v)=\text{HASH}\{\{\{{H C_t^H(u) \mid u \in f_H(e)}\} \mid e \in E_{i} \ \text{with}\ v \in f_H(e)\}\}$, $\{\{.\}\}$ represents multiset notation.\ 
    $H_{1}$ and $H_{2}$ are non-isomorphic if \\
    \begin{align*}
    \left\{\left\{H C_i^{H_1}(v) \mid v \in V\left(H_1\right)\right\}\right\} \neq\left\{\left\{H C_i^{H_2}(v) \mid v \in V\left(H_2\right)\right\}\right\}
    \end{align*}
    
    We start by proving for any HGNN of the form of Eq\ref{eq:1}, there exists an injective mapping $\Gamma$ between every iteration of color refinement and the layer of message propagation such that $\Gamma^{(t, l)}\left(H C_v^t\right) \rightarrow\left(h_v^l\right)$. 
    
    Assume that injective mapping $\Gamma^{(t, l)}$ exists. For $(t, l)=0$ trivially, both tensors are initially assigned some value under injectivity $H C_v^0, h_v^0$.
    
    For $(t+1, l+1)$, the feature update is given by,
    \begin{align*}
    h_v^{l+1}=\phi_3\left(h_{v^{\prime}}^l, \phi_2\left\{\left\{\phi_1\left(\left\{\left\{h_u^l \mid u \in e\right\}\right\}\right) \mid e \in E_i\right\}\right\}\right)
    \end{align*}
    as $\phi_1, \phi_2, \phi_3$ are all injective, they could be composed into an injective function $\varphi$, therefore, \\
    \noindent
    \begin{align*}
        &= \varphi\left(h_v^l,\left\{\left\{\left(\left\{\left\{h_u^l \mid u \in e\right\}\right\}\right) \mid e \in E_i\right\}\right\}\right) \\
        &= \varphi\left(\Gamma^{(t, l)}\left(H C_v^t\right),\left\{\left\{\left(\left\{\left\{\Gamma^{(t, l)}\left(H C_u^t\right) \mid u \in e\right\}\right\}\right) \mid e \in E_i\right\}\right\}\right) \\
        &= \Gamma^{(t, l+1)}\left(H C_{v^{\prime}}^t,\left\{\left\{\left(\left\{\left\{\left(H C_u^t\right) \mid u \in e\right\}\right\}\right) \mid e \in E_i\right\}\right\}\right) \\
        &= \Gamma^{(t+1, l+1)}\left(H C_v^{t+1}\right)
    \end{align*}
    Where injective function $\Gamma^{(t+1, l+1)}$ is induced by $\varphi, \Gamma^{(t, l)}$.
    Hence by principles of induction, we can say, there exists an injective function $\Gamma^{(t, l)}$ at time $t$ and layer $l$ such that $\Gamma^{(t, l)}\left(H C_v^t\right) \rightarrow\left(h_v^l\right)$. 
    
    Therefore, at iteration $T$, if 
    $
    HC_{(t)}\left(V_1, H_1\right) \neq H C_{(t)}\left(V_2, H_2\right)$, \text{then}
    $\left\{\left\{h_{v, H_1}^T\right\}\right\}_{v \in V_1} \neq\left\{\left\{h_{v, H_2}^T\right\}\right\}_{v \in V_2}$.
\end{proof}

\subsection{Proof of Theorem 1 \label{proof:t1}}

\begin{proof}
    \textbf{Part 1:} We first establish that a function $g_{\theta} \in F^{\text{DPHGNN}}$ belongs to a family of $k$-order invariant networks. 
    
    Let $S_{n}$ denote the permutation group on $n$ elements. In the dynamic feature fusion (DFF), dynamic features $m^{(l)}_{p} = \phi_{mask}[\text{L}]$ where $p \in V_{*}$ comes from the update rule: \\ $L = \left[\operatorname{Aggr}^{(l)}\left(\left\{m_u^{(l-1)} \forall u \in N(p)\right\}\right)\right]$. The function $\phi_{mask}$, is bijective; therefore, the updated representations satisfy $\text{L}(f \cdot{m^{(l)}_{p}}) = f \cdot L(m^{(l)}_{p})$, $\text{f} \in S_{n}$. Hence, the update mechanism of DPHGNN follows the form (a $k$-order invariant graph network) $F = m \circ h \circ L_{d} \circ \sigma \circ \dots \circ \sigma \circ L_{1}$, where $h$ is an invariant linear layer, $\sigma$ is non-linear activation function and $L_{i}$ are equivariant layers, $m$ is the multilayer perceptron (FC). \citet{c:17}, in Theorem 1, proved the expressive power of $k$-order invariant graph network equivalent to the $k$-WL test (for graphs).
    
    \textbf{Part 2:} We quantify the expressive power of DPHGNN as a 1-FWL test. The main difference is between the update rules of WL and FWL tests. FWL assigns distinct colorings to the $k$-tuple of nodes. The feature fusion mechanism in DPHGNN updates the hypernode aggregation as \vspace{0.5mm}\\ 
    \begin{align*}
        m_e^{(l-1)}&=\operatorname{Aggr}^{(l-1)}\left(\{\{m_u^{(l-1)} \forall u \in e\}\} \cup\{\{m_p^{(l)} \forall p \in E\}\}\right).
    \end{align*}
    Therefore, now the tensor $\text{HC} \in \mathbb{R}^{(n^k \times a) + b}$ encodes the hypernodes, where coloring $b$ is assigned by $\operatorname{map} f_G(e): V \rightarrow V^{'}$ and $V^{'} \in \mathbb{R}^{b}$. We formulate the FWL version of hypergraph color refinement as, \\ \vspace{1mm}
    $\text{HC}_{t+1}^H(v)=\text{HASH}\{\{\{{H C_t^H(u) \mid u \in \psi_H(e)}\} \mid e \in E_{i}  \text{with}\ v \in \psi_H(e)\}\}$, where $\operatorname{map} \ \psi_H(e) := (f_G \circ f_H)(e)$.
    The update mechanism of the hypernodes in DPHGNN is given by: \vspace{0.5mm}
    \\
    $x_v^{(l)}=\operatorname{Update}^{(l)}(\{x_v^{(l-1)}, \operatorname{Aggr}^{(l)}\{m_e^{(l-1)} \forall e \in E_i\}\})$ \vspace{0.5mm}. 
    The injectivity of Aggregation and Readout functions follows similarly from Proposition \label{proposition1}. The expressive power of $k$-FWL test is equivalent to $(k+1)$-WL for $k\geq2$. The function $g_{\theta} \in F^{\text{DPHGNN}}$ is at least as powerful as the 3-GWL test.
\end{proof}

\begin{figure}[!t]
\centering
\includegraphics[width=1.0\columnwidth]{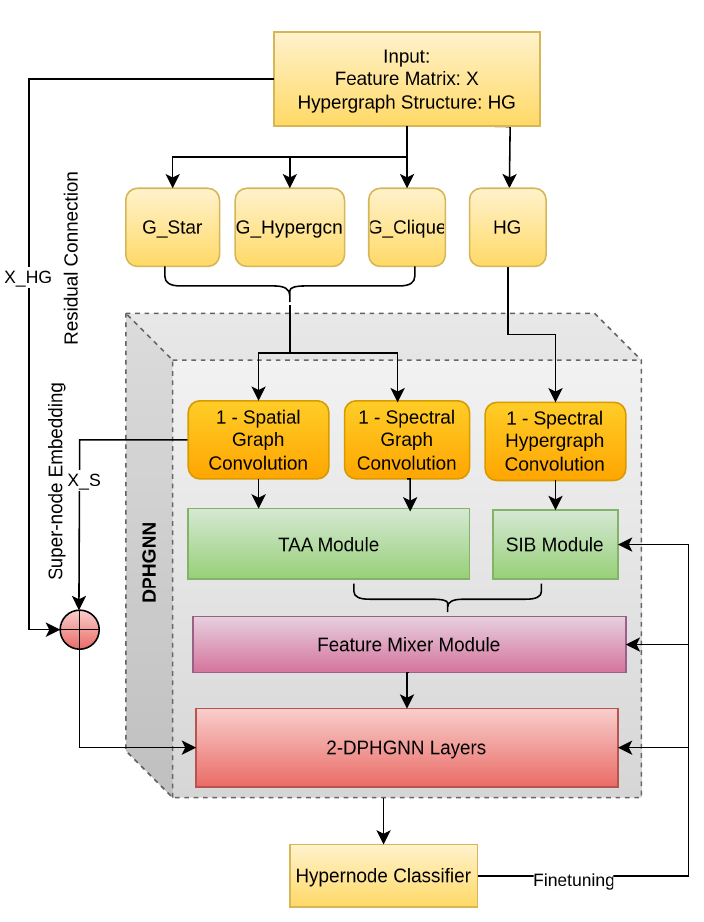}
\caption{Generalized version of experimentation pipeline.}\label{fig:training_pipeline}
\vspace{-5mm}
\end{figure}

\begin{figure*}
\centering
\includegraphics[width=0.85\textwidth]{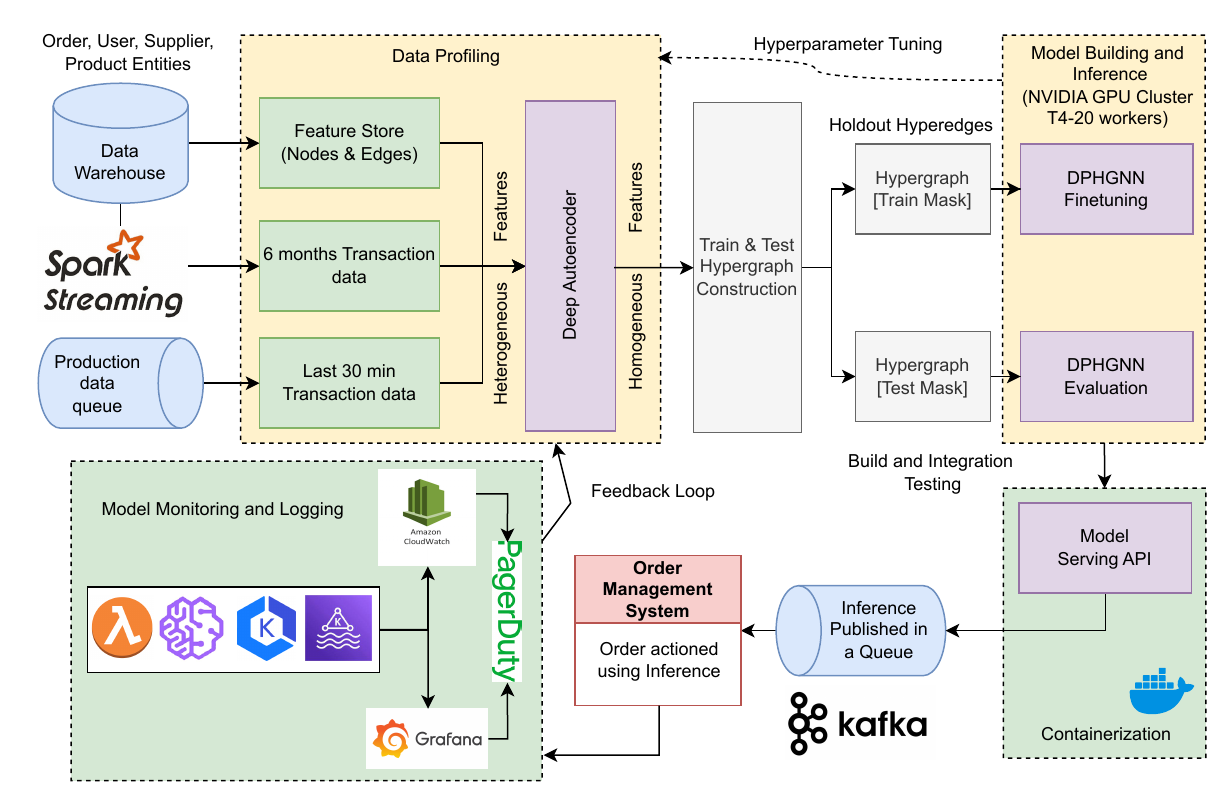}
\caption{Deployment details of DPHGNN into our partner e-commerce ecosystem.}
\label{fig: deployment}
\vspace{-3mm}
\end{figure*}

\begin{table*}[t]
\centering
\resizebox{2.0\columnwidth}{!}{
\begin{tabular}{l|llllllll}
\hline
Dataset          & HGNN  & HGNN+ & HyperGCN & HNHN & UniGCN & UniSAGE & UniGAT & DPHGNN \\ \hline
CA-DBLP          & 605.9 & 401   & 563.4    & 358  & 432    & 455     & 502    & 528    \\
CA-Cora          & 228.4 & 145   & 183.4    & 122  & 161    & 157     & 174    & 182    \\
CC-CiteSeer      & 290.1 & 122   & 171.1    & 135  & 140    & 152     & 186    & 168    \\
CC-Cora          & 301.4 & 167   & 205.4    & 130  & 174    & 193     & 201    & 220    \\
Yelp Restaurant  & 570   & 194   & 495      & 185  & 218    & 223     & 247    & 312.2  \\
House Committees & 192   & 114   & 153      & 66   & 72     & 96      & 150    & 168    \\
Cooking200       & 502   & 156   & 258      & 144  & 174    & 216     & 216    & 233    \\
News20           & 406   & 138   & 210      & 108  & 102    & 108     & 156    & 174    \\
CO-RTO           & 384   & 192   & 306      & 108  & 138    & 150     & 174    & 205    \\ \hline
\end{tabular}}
\caption{Runtime analysis between proposed DPHGNN with baseline architectures (in seconds).}
\label{tab:runtime}
\end{table*}

\subsubsection{Convergence Analysis.}
Here, we describe the convergence analysis of DPHGNN for the standard hypergraph datasets and the CO-RTO dataset. Figure \ref{fig: loss} reflects the behavior of cross-entropy loss over the change in hypergraph topology. The convergence for co-citation is slow as the spectral information is propagated as inductive priors. However, DPHGNN achieves a fast convergence rate for the Co-authorship datasets as explicit lower-order information is propagated as dynamic feature fusion. To learn in resource-constrained settings for the Co-RTO dataset, the loss initially increases and converges better than in co-citation networks. This signifies the flow of inductive priors. Due to the sparse and highly symmetrical structure, the initial performance suffers from providing better representations, the static and dynamic fusion of lower-order information helps message propagation to achieve better convergence.

\subsection{Implementation Details}
DPHGNN deploys an end-to-end differentiable training pipeline to learn feature representations of both graph and hypergraph structures. Figure \ref{fig: deployment} illustrates data flow in DPHGNN architecture, given feature matrix $X$ and hypergraph $HG$. The hypergraph topology is decomposed into three different graph topologies, which provide input for topology-aware attention (TAA), and $X$ is provided to static inductive biases (SIB) in parallel. At last, the feature fusion module performs HGNN aggregation and updates of explicit graph representations fused with hypernode aggregations. The loss is backpropagated from the prediction layer to all the sub-modules of DPHGNN.

\paragraph{Hyperparameter Details.}
Table \ref{table7} summarizes the hyperparameter details for each submodule of DPHGNN architecture. We used GridSearchCV (from the sklearn library) for hyperparameter tuning. The sub-modules of DPHGNN, namely (i) the GNN module comprises the graph convolution training of multiple views of decomposed graph topology (as described in the TAA section in the main paper); (ii) the TAA module contains a multi-head wrapper of cross-attention modules with specific heads for each of the hypergraph datasets; (iii) the SIB module is trained as a hybrid spectral convolution layer. For the model parameter optimizations, we use a specific parameter group for each of the submodules of the DPHGNN architecture.

\paragraph{Experimentation Details.}
For the semi-supervised hypernode classification task, we train the model on each dataset for 400 epochs and infer the trained model from the last epoch. We perform 50 iterations of DPHGNN on each dataset, with five train/test splits and ten different random seeds. Figure \ref{fig:training_pipeline} presents a concise view of training mechanism adopted by DPHGNN.

\begin{table*}[]
\centering
\resizebox{\textwidth}{!}{
\begin{tabular}{c|c|cccccc}
\hline
Datasets    & Modules    & lr    & weight\_decay & dropout & attention\_heads & hidden & \# layers \\ \hline
            & GNN module & 0.1   & 5.0E-04       & 0.2     & -                & 64     & 2         \\
CA-Cora     & TAA module & 0.001 & 0.001         & 0.5     & 2                & 32     & 1         \\
            & SIB module & 0.01  & 0.0005        & 0.4     & -                & 64     & 1         \\
            & DFF module & 0.01  & 5.0E-04       & 0.5     & -                & 64     & 2         \\ \hline
            & GNN module & 0.1   & 5.0E-05       & 0.6     & -                & 64     & 2         \\
CA-DBLP     & TAA module & 0.1   & 0.001         & 0.5     & 2                & 32     & 1         \\
            & SIB module & 0.01  & 0.0005        & 0.4     & -                & 64     & 1         \\
            & DFF module & 0.01  & 5.0E-04       & 0.6     & -                & 64     & 2         \\ \hline
            & GNN module & 0.001 & 0.001         & 0.5     & -                & 64     & 2         \\
CC-Cora     & TAA module & 0.01  & 5.0E-04       & 0.5     & 8                & 8      & 1         \\
            & SIB module & 0.01  & 0.0005        & 0.4     &                  & 64     & 1         \\
            & DFF module & 0.01  & 5.0E-04       & 0.6     & -                & 64     & 2         \\ \hline
            & GNN module & 0.1   & 1.0E-05       & 0.9     & -                & 64     & 2         \\
CC-Citeseer & TAA module & 0.001 & 0.001         & 0.5     & 8                & 8      & 1         \\
            & SIB module & 0.01  & 0.0005        & 0.4     & -                & 64     & 1         \\
            & DFF module & 0.01  & 5.0E-04       & 0.6     & -                & 64     & 2         \\ \hline
            & GNN module & 0.01  & 5.0E-05       & 0.3     & -                & 64     & 2         \\
CO-RTO      & TAA module & 0.001 & 0.001         & 0.5     & 2                & 32     & 1         \\
            & SIB module & 0.01  & 0.0005        & 0.4     & -                & 64     & 1         \\
            & DFF module & 0.01  & 5.0E-04       & 0.5     & -                & 64     & 2         \\ \hline
            & GNN module & 0.01  & 5.0E-05       & 0.3     & -                & 64     & 2         \\
YelpRestaurant      & TAA module & 0.001 & 0.001         & 0.5     & 2                & 64     & 1         \\
            & SIB module & 0.01  & 0.0005        & 0.4     & -                & 64     & 1         \\
            & DFF module & 0.01  & 5.0E-04       & 0.5     & -                & 64     & 2         \\ \hline
            & GNN module & 0.01  & 5.0E-05       & 0.3     & -                & 64     & 2         \\
HouseCommittees      & TAA module & 0.001 & 0.001         & 0.5     & 4                & 32     & 4         \\
            & SIB module & 0.01  & 0.0005        & 0.4     & -                & 64     & 1         \\
            & DFF module & 0.01  & 5.0E-04       & 0.5     & -                & 64     & 2         \\ \hline
            & GNN module & 0.01  & 5.0E-05       & 0.3     & -                & 64     & 2         \\
Cooking200      & TAA module & 0.001 & 0.001         & 0.5     & 8                & 64     & 1         \\
            & SIB module & 0.01  & 0.0005        & 0.4     & -                & 64     & 1         \\
            & DFF module & 0.01  & 1.0E-03       & 0.5     & -                & 64     & 2         \\ \hline
            & GNN module & 0.01  & 1.0E-2       & 0.3     & -                & 64     & 2         \\
News20      & TAA module & 0.001 & 0.001         & 0.5     & 4                & 32     & 1         \\
            & SIB module & 0.01  & 0.0005        & 0.4     & -                & 64     & 1         \\
            & DFF module & 0.01  & 0.01       & 0.5     & -                & 64     & 2         \\ \hline

\end{tabular}}
\caption{Hyperparameters used in our experiments.}
\label{table7}
\end{table*}

\subsection{Embedding Visualization \label{sec:viz}}
DPHGNN performs a series of operations to leverage both spatial and spectral inductive biases. Here, we provide a comparative analysis of change in embeddings between DPHGNN and the best baseline model respective to the hypergraph datasets in Figures \ref{fig:CORTO}, \ref{fig:CACORA} and \ref{fig:NEWS20}.

\begin{figure*}[!t]
    \centering
    \begin{minipage}{0.70\textwidth}
        \includegraphics[width=0.33\textwidth]{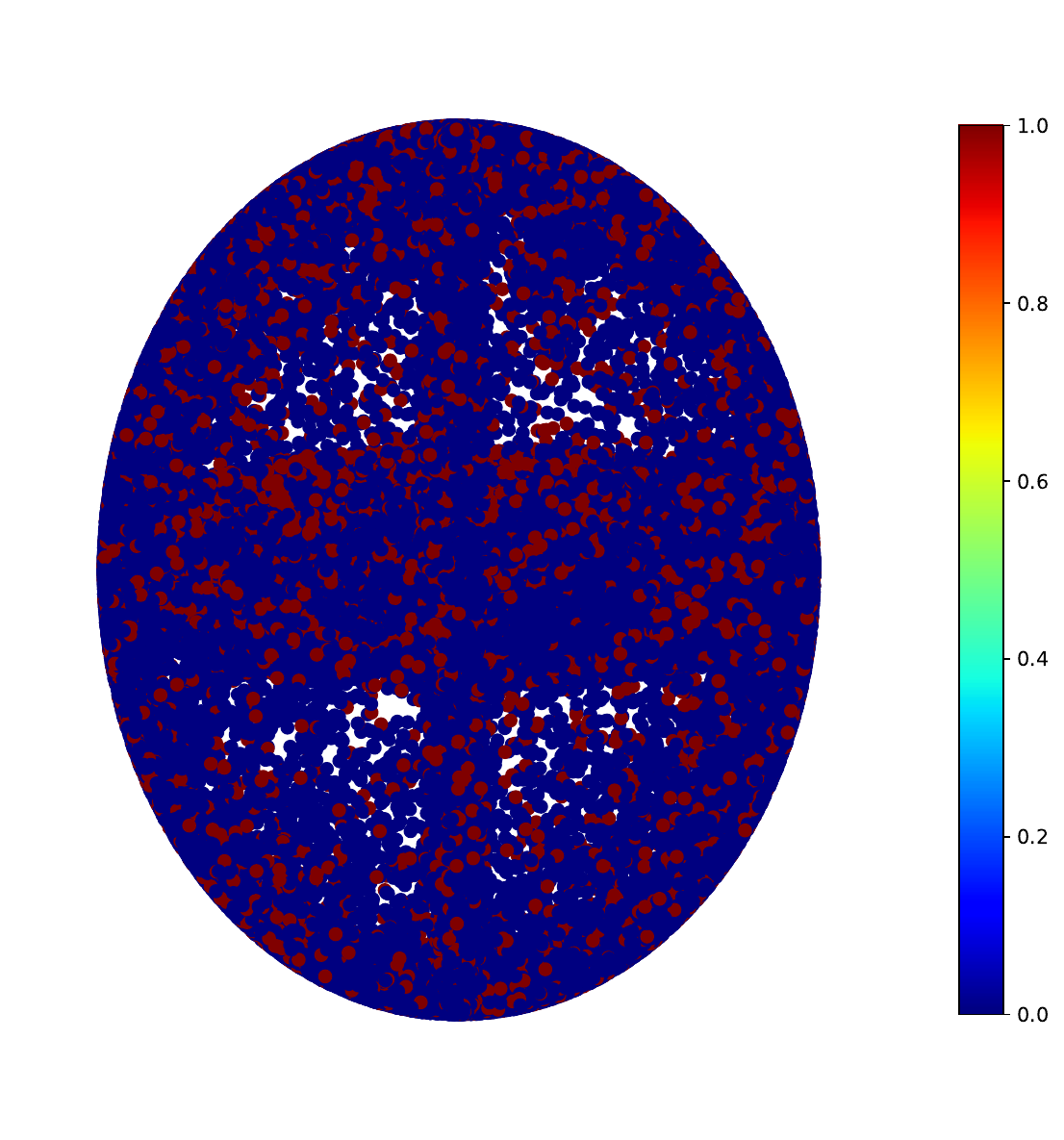}\hfill
        \includegraphics[width=0.33\textwidth]{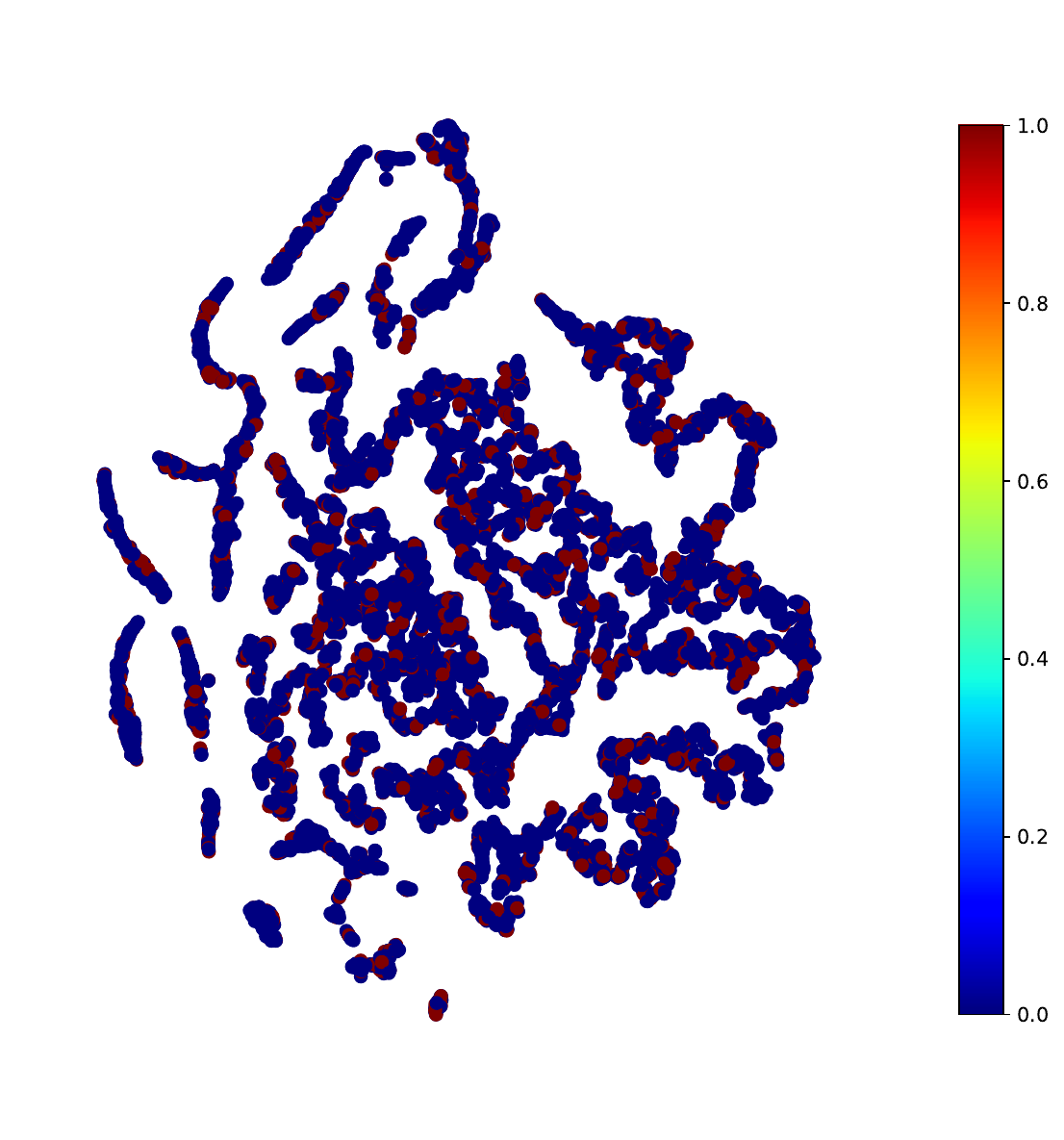}\hfill
        \includegraphics[width=0.33\textwidth]{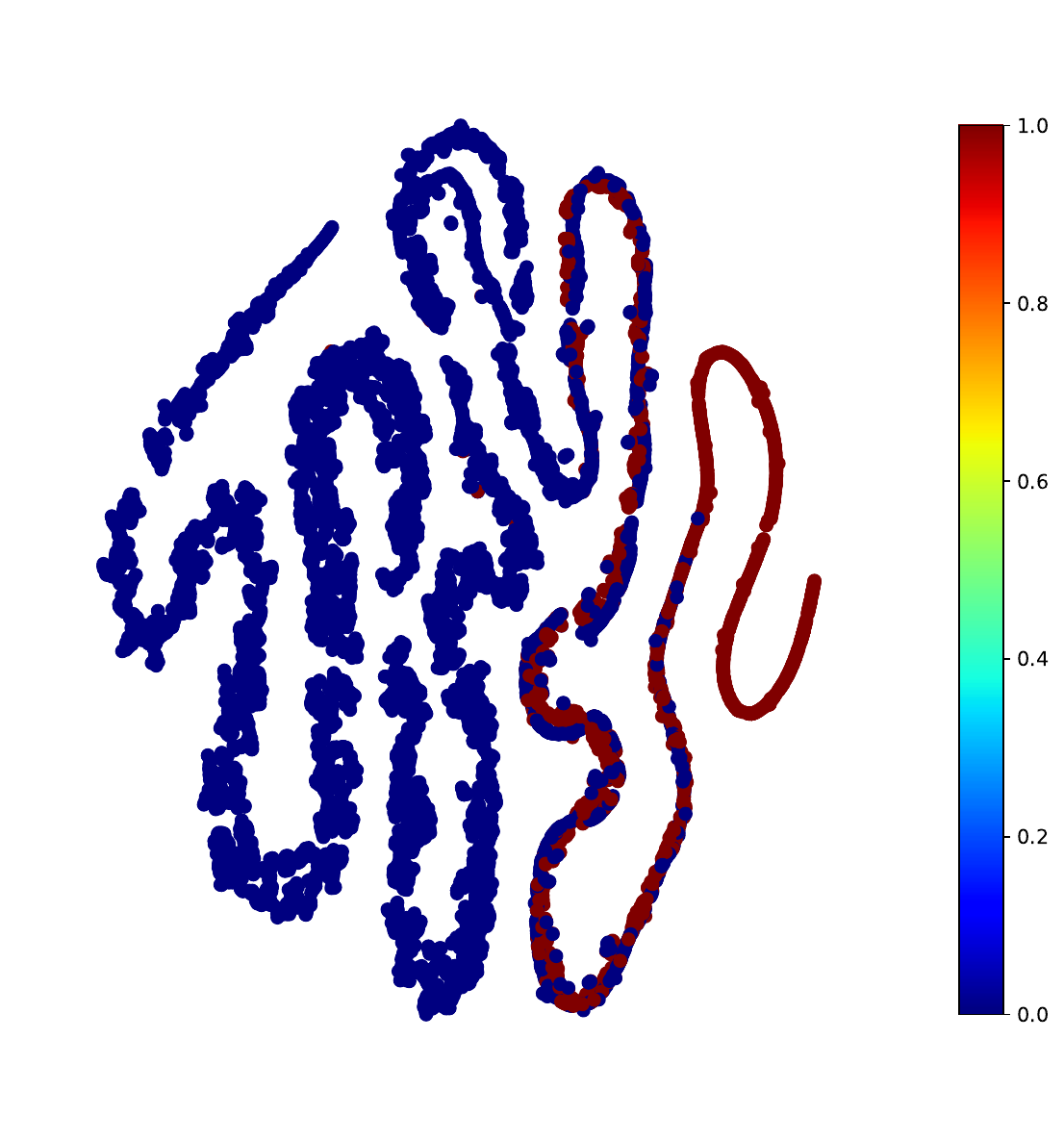}
    \end{minipage}%
    \begin{minipage}{0.20\textwidth}
        \caption{Visualization of feature embedding update on the CO-RTO dataset (2 classes) -- initial embedding (left), UniGCN embedding update (middle), DPHGNN embedding update (right)}
    \label{fig:CORTO}
    \end{minipage}
\end{figure*}

\begin{figure*}[!t]
    \centering
    \begin{minipage}{0.70\textwidth}
        \includegraphics[width=0.33\textwidth]{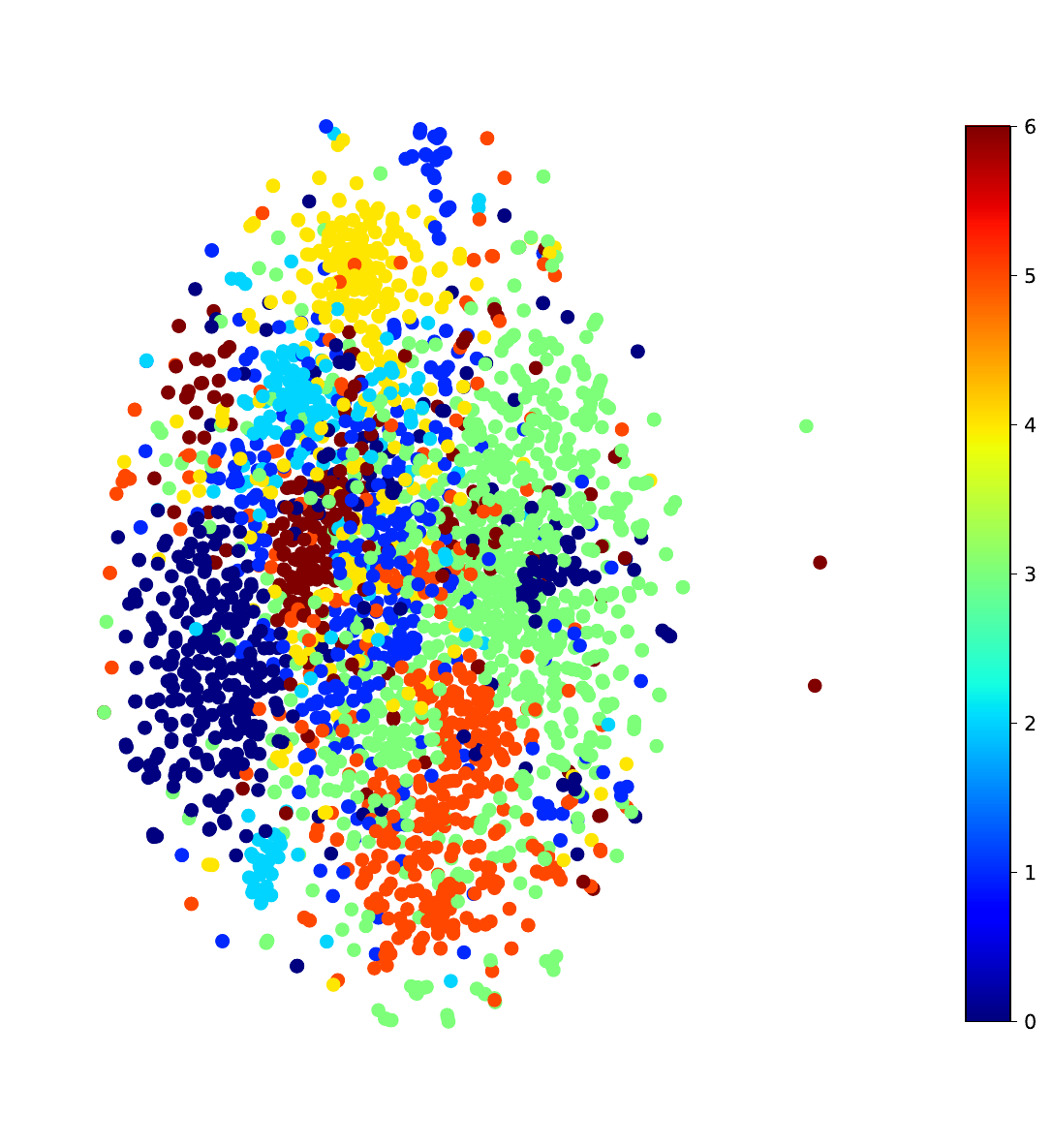}\hfill
        \includegraphics[width=0.33\textwidth]{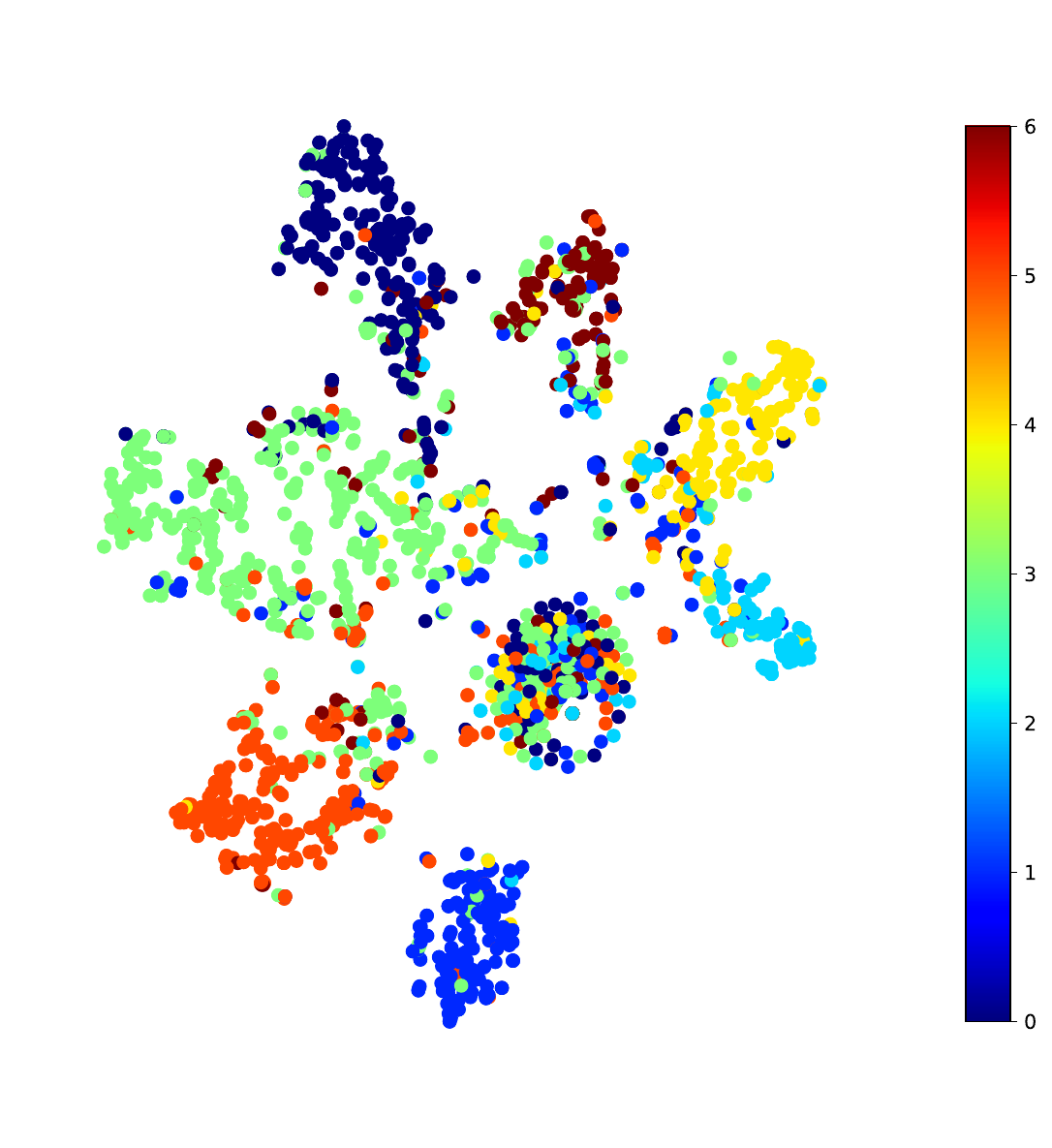}\hfill
        \includegraphics[width=0.33\textwidth]{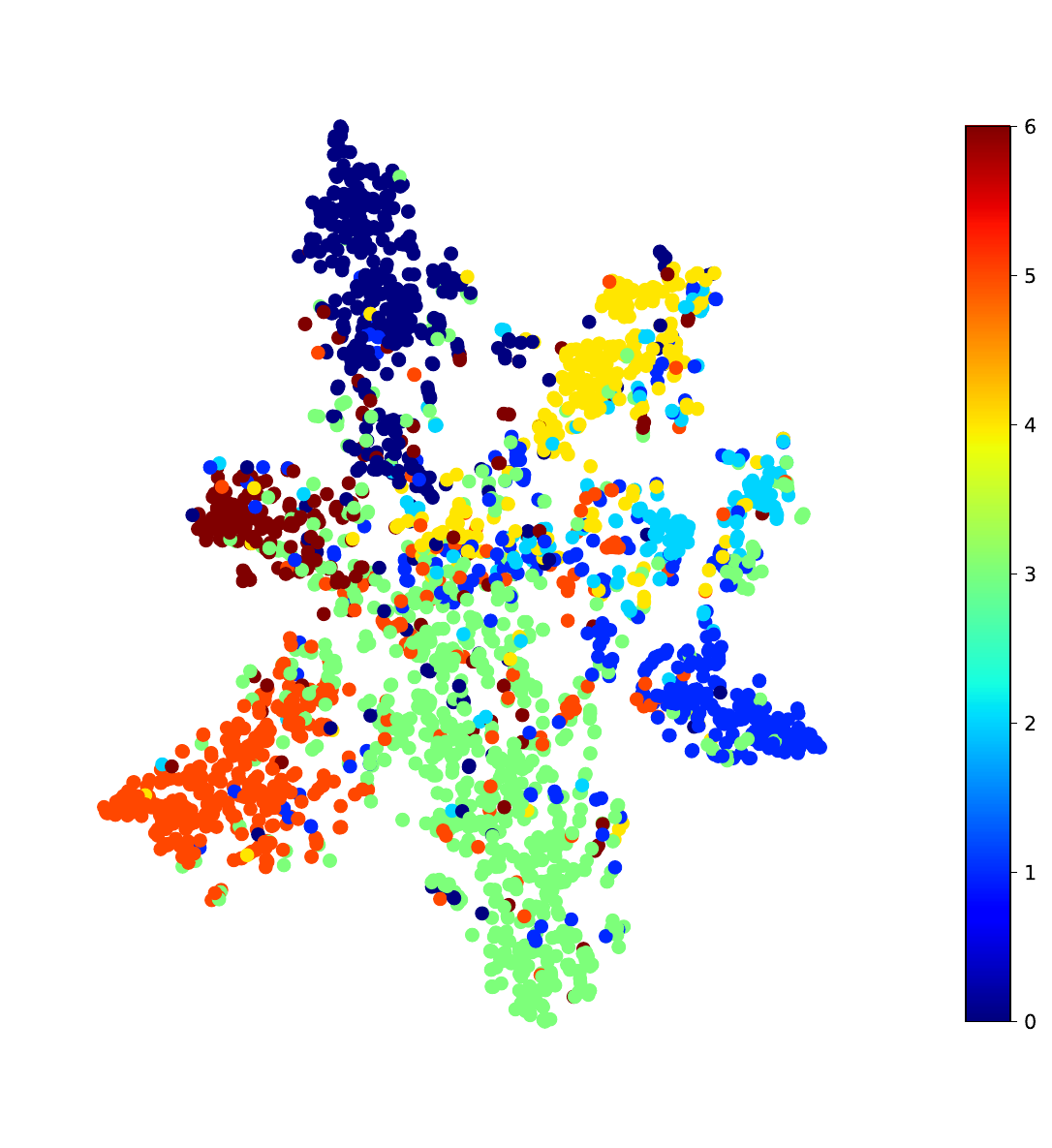}
    \end{minipage}%
    \begin{minipage}{0.20\textwidth}
        \caption{Visualization of feature embedding update on the CA-Cora dataset (5 classes) -- initial embedding (left), UniGCN embedding update (middle), DPHGNN embedding update (right).}
    \label{fig:CACORA}
    \end{minipage}
\end{figure*}

\begin{figure*}[!t]
    \centering
    \begin{minipage}{0.70\textwidth}
        \includegraphics[width=0.33\textwidth]{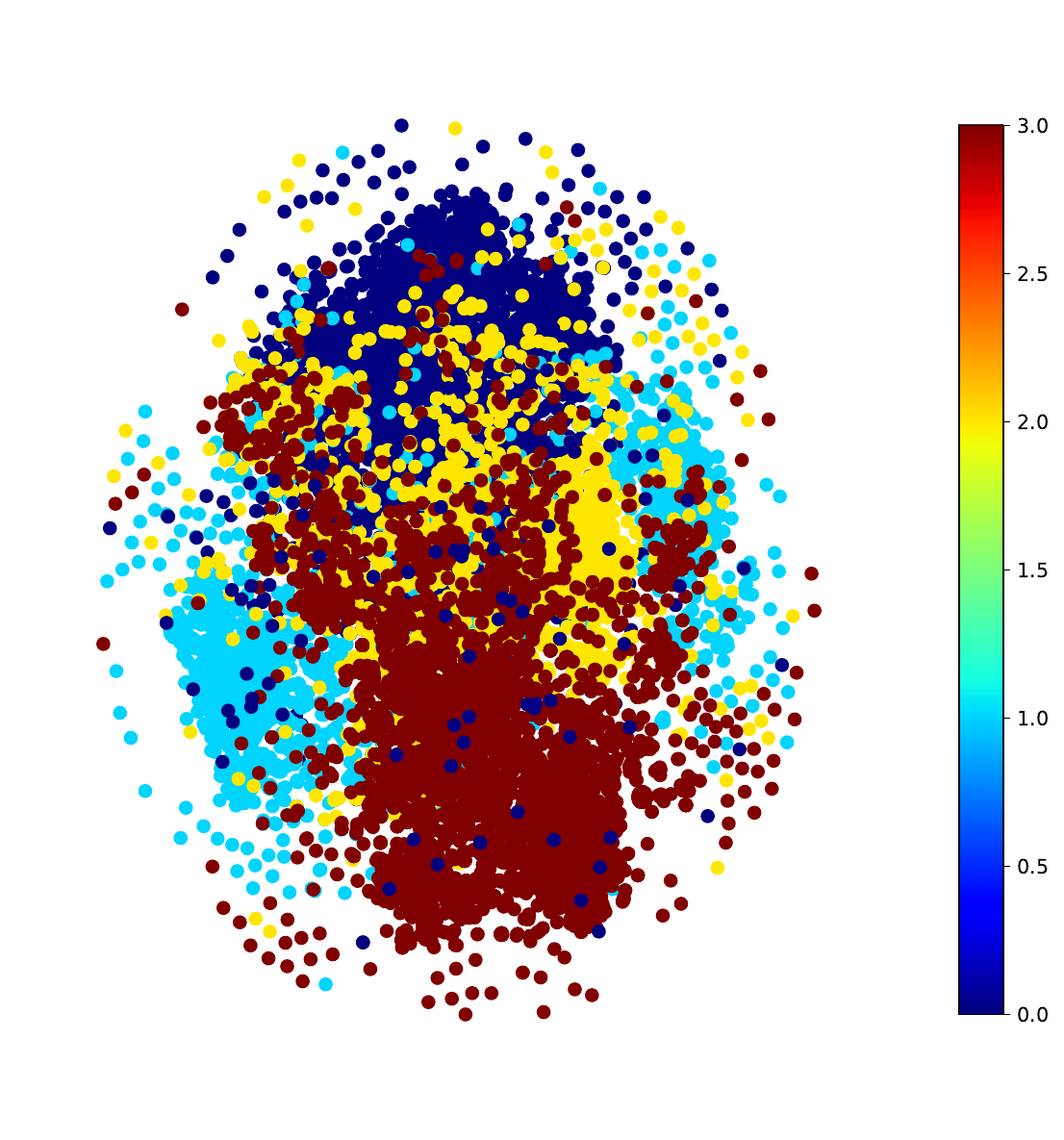}\hfill
        \includegraphics[width=0.33\textwidth]{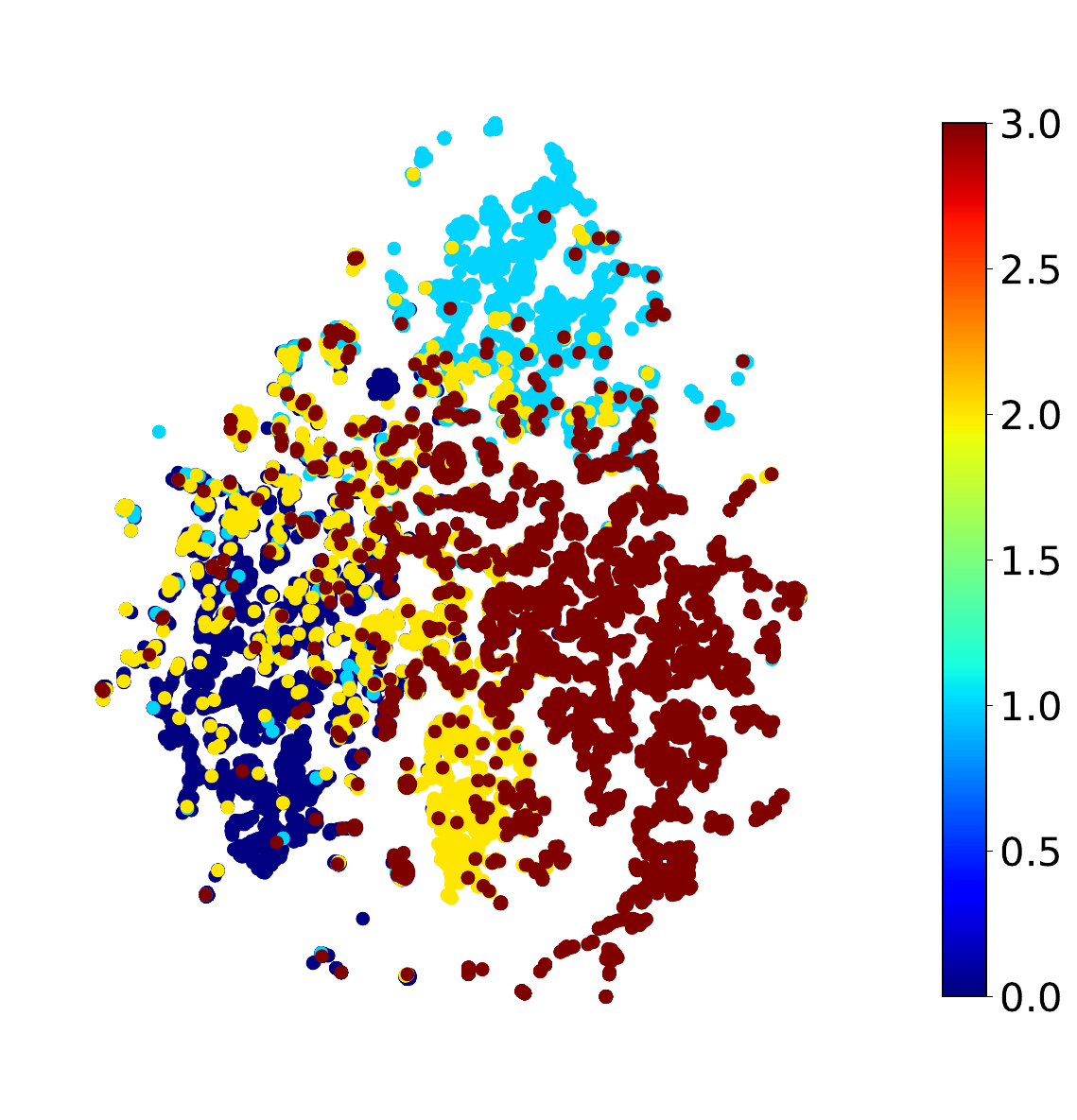}\hfill
        \includegraphics[width=0.33\textwidth]{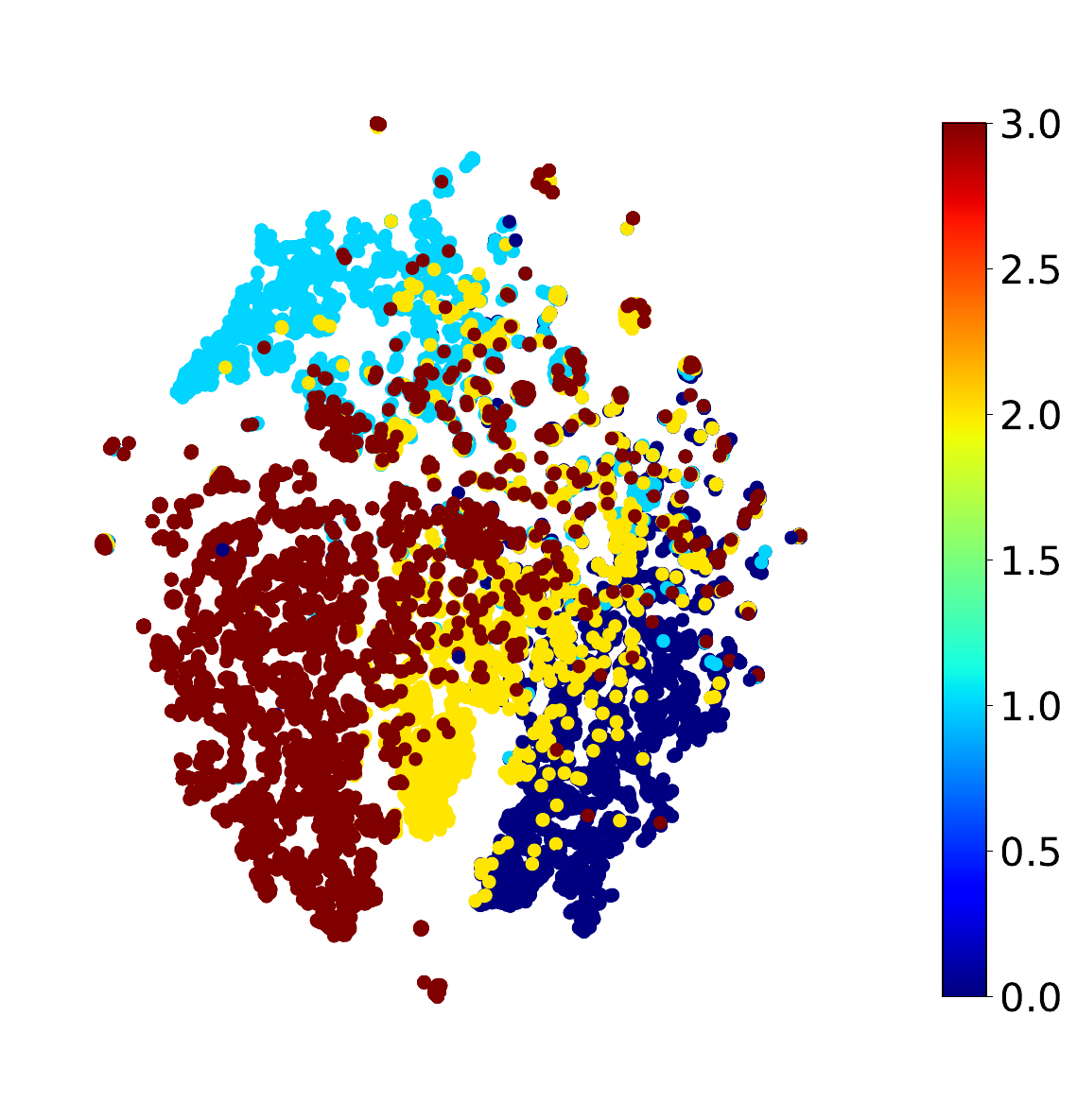}
    \end{minipage}%
    \begin{minipage}{0.20\textwidth}
        \caption{Visualization of feature embedding update on News20 dataset (3 classes) -- initial embedding (left), UniGCN embedding update (middle), DPHGNN embedding update (right).}
    \label{fig:NEWS20}
    \end{minipage}
\end{figure*}

\end{document}